\theoremstyle{definition} \newtheorem{defn}{Definition}
\theoremstyle{plain} 
\theoremstyle{plain} \newtheorem{thm}[defn]{Theorem}
\theoremstyle{plain} \newtheorem{lem}[defn]{Lemma}
\theoremstyle{plain} 
\theoremstyle{remark} \newtheorem{rmk}[defn]{Remark}
\theoremstyle{remark} 
\def\namedlabel#1#2{\begingroup
    #2%
    \def\@currentlabel{#2}%
    \phantomsection\label{#1}\endgroup
}
\begin{document}

\title{\textbf{Making learning more transparent using\\conformalized performance prediction}}
\author{
  Matthew J.~Holland\thanks{Please direct correspondence to \texttt{matthew-h@ar.sanken.osaka-u.ac.jp}.}\\
  Osaka University
}
\date{} % empty date.

\maketitle

\begin{abstract}
In this work, we study some novel applications of conformal inference techniques to the problem of providing machine learning procedures with more transparent, accurate, and practical performance guarantees. We provide a natural extension of the traditional conformal prediction framework, done in such a way that we can make valid and well-calibrated predictive statements about the future performance of arbitrary learning algorithms, when passed an as-yet unseen training set. In addition, we include some nascent empirical examples to illustrate potential applications.
\end{abstract}

\tableofcontents

\newpage

\section{Introduction}\label{sec:intro}

As machine learning systems become increasingly entangled with human decision-making processes, it is of social, ethical, and economic importance to be able to understand and explain the \emph{limits} of what can be said about such technologies. We need to be able to formulate meaningful questions about the behavior or performance of learning systems that are accessible to a diverse audience, and we must have the tools required to provide unambiguous answers to these questions. From the viewpoint of responsible and sustainable system design, the tools used to \emph{answer questions about the learning system} are arguably even more important than the algorithms underlying the learning system itself.

In this context, there is an important line of research related to \term{conformal prediction}, a general-purpose methodology for attaching high-probability guarantees to data-driven predictors \citep{vovk2005ALRW,shafer2008a,lei2015a,lei2018a}. For example, if the user specifies a confidence level of say $90\%$ in advance, questions such as
\begin{center}
\textit{``I've trained a classifier. Given a new instance, what labels seem most likely?''}
\end{center}
can be given clear and meaningful answers by using conformal prediction to construct predictive sets that can be computed in practice, and which include the correct label(s) $90\%$ of the time. The key merit of this methodology is its generality: it can be applied to any classifier, with no special assumptions made on the underlying data distribution. Of course, this comes at a cost: one must sacrifice some training data, setting it aside for the calibration of prediction sets. Important early applications of conformal prediction were concerned with regression tasks \citep{lei2015a,lei2018a,tibshirani2020a,romano2020a}. More recently, the methodology has been adapted to handle classification tasks \citep{cauchois2020a,romano2020b}, and also more general decision-making tasks using predictive probability distributions instead of predictive sets \citep{vovk2018a,vovk2019a}. In all this existing literature, the trained predictor (the ``base predictor'') plays an ancillary role. That is, the only kinds of questions that can be answered with these techniques are those relating to the \emph{prediction of new labels, given a base predictor}.

In this work, we explore the possibilities for using conformal prediction methods to answer a wider class of questions about the performance of machine learning algorithms. Instead of predicting new labels given a base predictor, our focus lies with reliable \emph{prediction of performance on new data, given a base algorithm}.

\paragraph{Shifting the focus to performance metrics}

As a bit of motivation, first let us recall that the standard formulation of learning tasks (in the context of statistical learning theory) is as risk minimization problems \citep{haussler1992a,vapnik1998SLT,vapnik1999NSLT}. That is, assuming random data $Z \sim \ddist$ and a \term{loss} function $\loss(h;z)$ used to evaluate arbitrary candidate $h$, the goal is to minimize the expected loss, or \term{risk}, defined as a function of $h$ by $\risk_{\ddist}(h) \defeq \exx_{\ddist}\loss(h;Z) = \int_{\ZZ} \loss(h;z) \, \ddist(\dif z)$.
Here \term{performance} is measured using the risk induced by $\loss$, and thus performance guarantees for a given learning algorithm $\algo$ are inevitably statements about the nature of the risk incurred by outputs of $\algo$. Owing to the popularity of the PAC-learning framework \citep{valiant1984a,kearns1994CLTIntro}, arguably the most common form for performance guarantees to take is that of high-probability, finite-sample bounds on the risk incurred by $\algo_{n} \defeq \algo(\Z_{n})$, where $\Z_{n} = (Z_{1},\ldots,Z_{n})$ is a sample of independent copies of $Z \sim \ddist$. That is, one derives bounds $\varepsilon(n,\alpha,\ddist)$ depending on the sample size, desired confidence level $\alpha$, and underlying distribution $\ddist$ such that
\begin{align}\label{eqn:guarantee_pac}
\prr\left\{ \risk_{\ddist}\left(\algo_{n}\right) \leq \varepsilon(n,\alpha,\ddist) \right\} \geq 1-\alpha.
\end{align}
On one hand, this is appealing since it gives us a high-probability guarantee over the random draw of the sample $\Z_{n}$ passed to algorithm $\algo$. The risk is one quantity that formally captures the intuitive notion of \emph{off-sample performance}, and since it is defined by taking expectation, it is not too difficult to obtain reasonably tight upper bounds for many concrete classes of algorithm $\algo$. Such bounds can then be used as formal evidence for (or against) using certain algorithms for certain classes of learning problems, characterized by properties of $\ddist$ and the nature of $\loss(h;z)$.

On the other hand, while guarantees of the form given by (\ref{eqn:guarantee_pac}) are informative for broad classes of algorithms and learning tasks, their utility is extremely limited for any \emph{particular} machine learning application. The most obvious reason is the fact that the risk is an ideal quantity; since $\risk_{\ddist}$ depends on the unknown distribution $\ddist$, the risk itself can never be computed, and thus the high-probability ``good performance'' event can never actually be checked. Even with a ``test set'' allocated for approximating $\risk_{\ddist}(\algo_{n})$, in many cases the bound $\varepsilon(n,\alpha,\ddist)$ also depends on unknown quantities. Even with finite models, constructing tight data-based bounds is a highly non-trivial problem \citep{langford2004a}, and the issues only grow more severe with bounds for more common machine learning models, which tend to be very \emph{pessimistic}, and do not accurately reflect \emph{typical} performance \citep{nagarajan2020a}. This is an inevitable tradeoff that comes with formal guarantees that hold under very weak assumptions on the underlying data-generating process.

With these issues in mind, we consider a different approach, in which we utilize the key statistical principles underlying conformal prediction to obtain \emph{computable} quantities that enable reliable predictions about the performance metric (here $\loss$ depending on $\algo$), for which we have rigorous guarantees of validity and calibration. Since our focus is on performance metrics used in learning and/or evaluation, we call this general approach \term{conformal performance prediction} (CPP). In this initial study, we consider two distinct classes of CPP sets:
\begin{enumerate}
\item \textbf{Candidate CPP:} having executed a learning algorithm $\algo$ and obtained a trained candidate $\algo_{\TR}$, construct a prediction set which reliably covers the loss incurred by this candidate at a new data point $Z$.
\begin{align}\label{eqn:cpp_candidate}
\prr\left\{ \loss(\algo_{\TR};Z) \in \widehat{\EE}_{\alpha}(\algo_{\TR}) \right\} \geq 1-\alpha.
\end{align}

\item \textbf{Algorithm CPP:} construct a prediction set which reliably covers the loss incurred at a new data point $Z$, after the algorithm is run on a new training set $\Z$.
\begin{align}\label{eqn:cpp_algo}
\prr\left\{ \loss(\algo(\Z);Z) \in \widehat{\EE}_{\alpha}(\algo) \right\} \geq 1-\alpha.
\end{align}
\end{enumerate}
A more detailed exposition and analysis will be given in section \ref{sec:cpp}, but for the moment, let us take a high-level look at the key traits of each approach. Clearly, both candidate CPP and algorithm CPP make predictions about the nature of as-yet unobserved losses based on already-observed data. The key difference comes from the source of randomness in the losses. In candidate CPP, we split the data into training and calibration subsets $\Z_{\TR}$ and $\Z_{\CP}$, run $\algo$ once on $\Z_{\TR}$, and then using $\Z_{\CP}$ construct a CPP set $\widehat{\EE}_{\alpha}$ to make predictions about how the \emph{particular} $\algo_{\TR} \defeq \algo(\Z_{\TR})$ we have learned will perform off-sample, i.e., when the new $Z$ is observed. In contrast, algorithm CPP also captures the uncertainty involved in the learning process itself, making predictions about off-sample performance when $\algo$ is run on a new sample $\Z$, and this can either be modulated by $Z$ or done independently of $Z$. This freshly-drawn sample $\Z$ appears only in algorithm CPP, it plays no role in the candidate-centric case. Intuitively, if we have guarantees of the form (\ref{eqn:cpp_candidate}), questions such as
\begin{center}
\textit{``Training is complete. Given a new example, what kind of performance can I expect?''}
\end{center}
can be given a reasonably clear answer. In contrast, questions such as
\begin{center}
\textit{``I've designed a new algorithm. Once trained, what kind of performance can I expect?''}
\end{center}
can only be answered by guarantees of the form (\ref{eqn:cpp_algo}). We give two illustrative examples in Figures \ref{fig:ex_optdigits}--\ref{fig:ex_quadnoise}.

\paragraph{Paper outline}

The remainder of the paper is structured as follows. In section \ref{sec:bg_conformal}, we provide a concise review of the key ideas and technical facts underlying conformal prediction, which will be utilized in the subsequent analysis. In section \ref{sec:cpp}, we formulate the CPP framework, providing validity guarantees, implementable algorithms, and a discussion of natural extensions to the base framework. Finally, in section \ref{sec:empirical}, we provide some nascent empirical examples meant to illustrate the way in which the proposed framework can be applied to a wide variety of learning tasks, and used to provide answers to questions about the learning procedures that could not be readily obtained using existing methods.

\begin{figure}[t]
\centering
\includegraphics[width=0.75\textwidth]{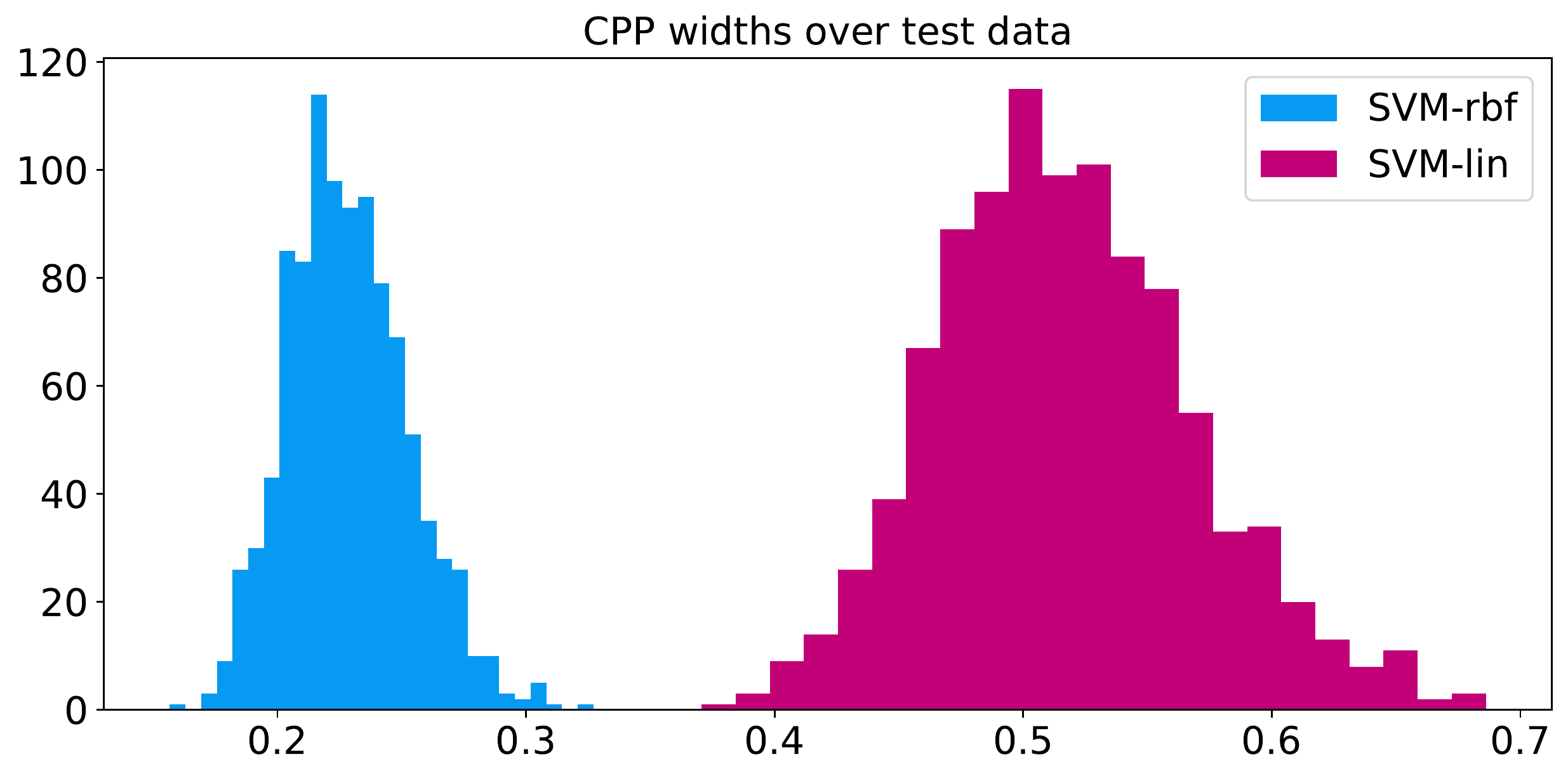}
\bigskip
\includegraphics[width=0.75\textwidth]{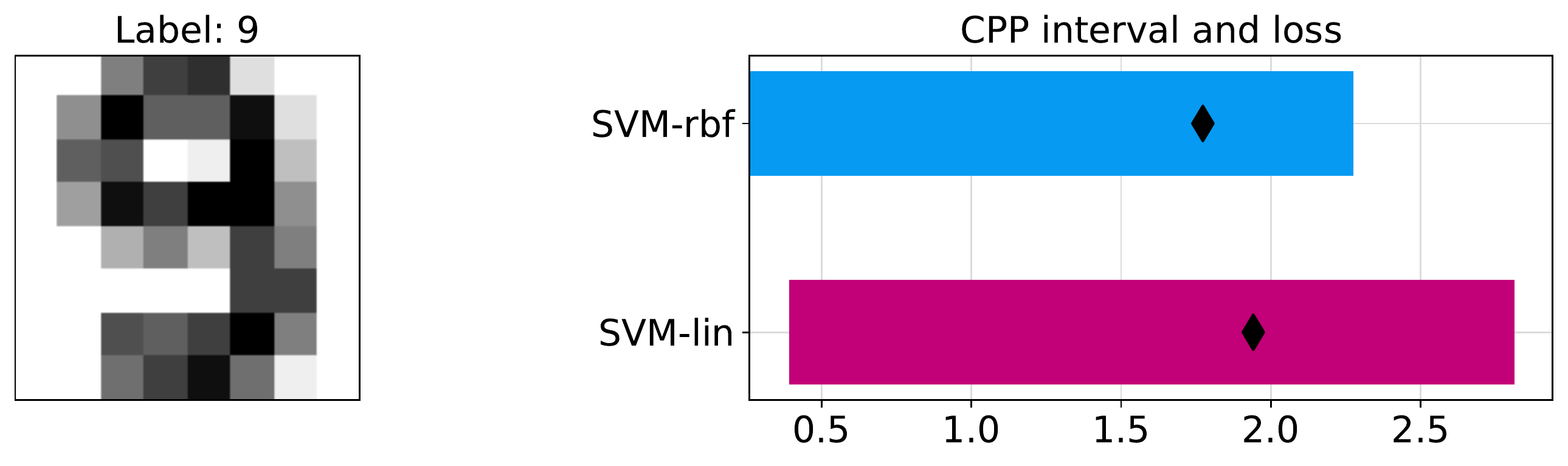}
\caption{An example of CPP applied to compare linear (magenta) vs.~non-linear (azure) SVM, run on the digits data, with performance measured using the logistic loss, at $90\%$ confidence. The top figure compares CPP interval widths over multiple randomized trials. The bottom figure (right-hand side) shows the exact CPP predicted for each method given over the random draw of a new training sample, evaluated at a particular new image (left-hand side). Black diamonds denote the actual losses incurred. See section \ref{sec:empirical_optdigits} for details.}
\label{fig:ex_optdigits}
\end{figure}

\begin{figure}[t]
\centering
\includegraphics[width=0.75\textwidth]{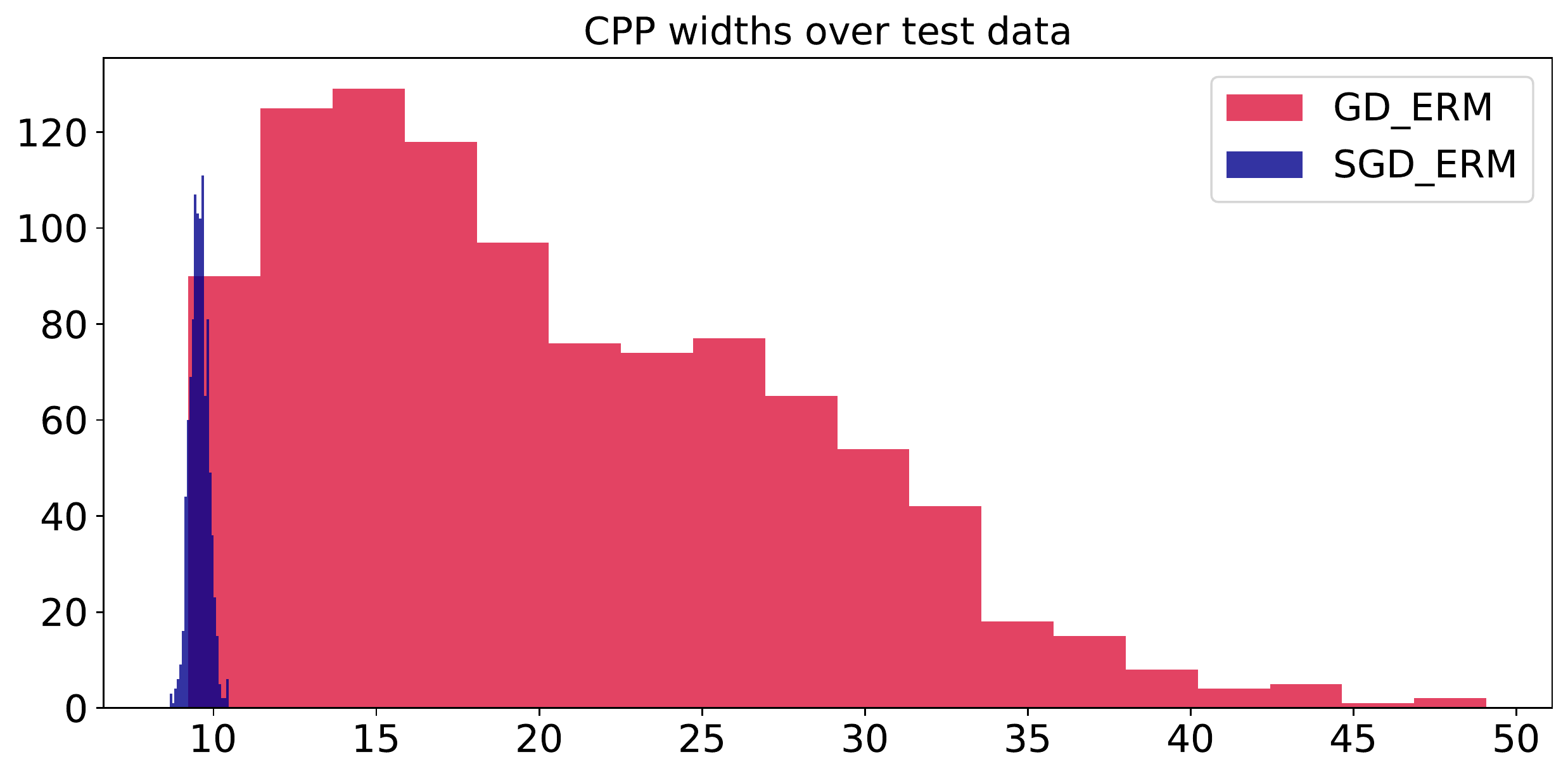}
\bigskip
\includegraphics[width=0.75\textwidth]{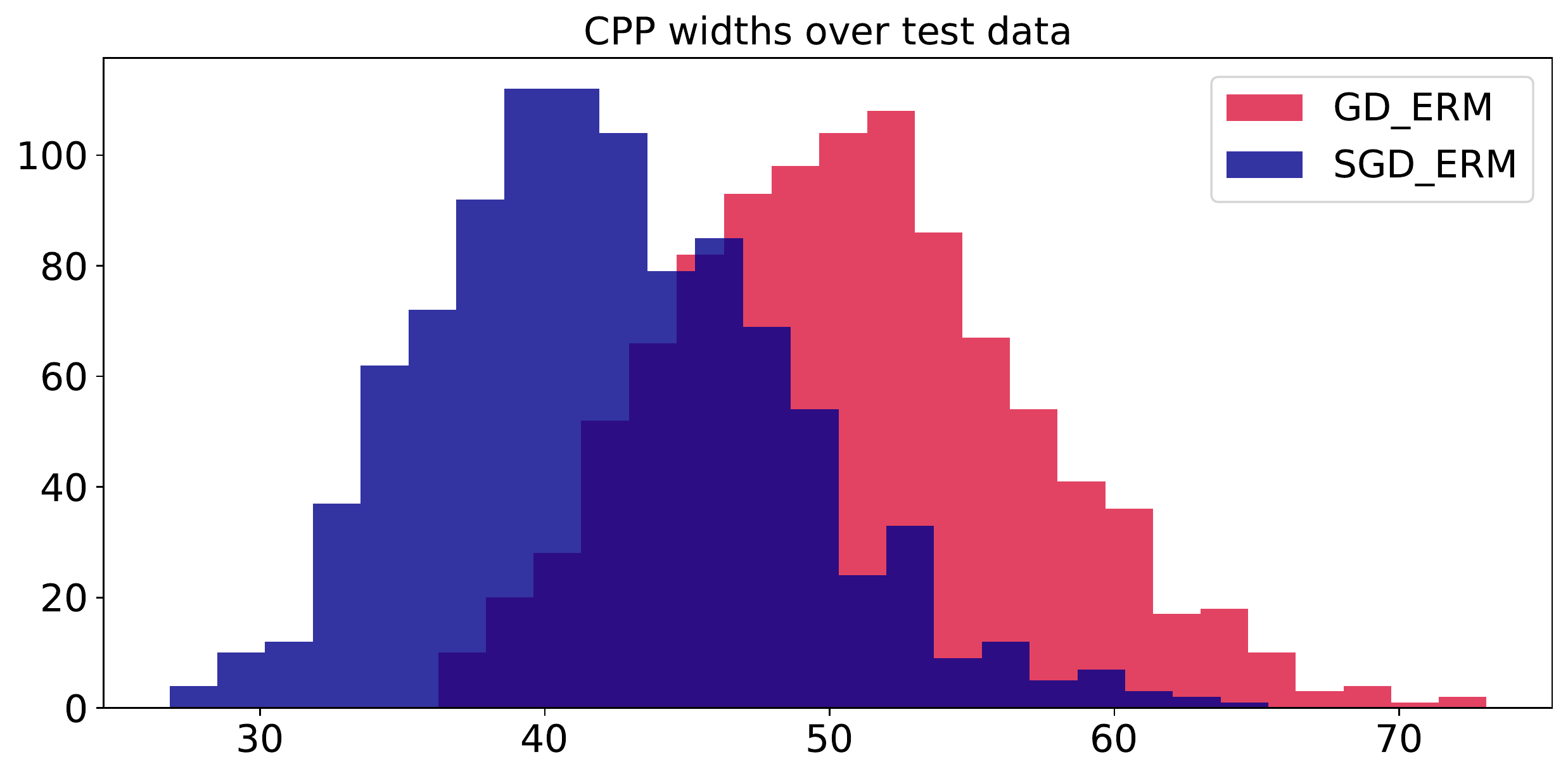}
\caption{An example of CPP applied to compare batch gradient descent (red) with stochastic gradient descent (blue), for a simple stochastic convex optimization problem, with performance measured using the squared error, at $90\%$ confidence. The top figure shows candidate CPP interval widths (section \ref{sec:cpp_candidate}), whereas the bottom figure shows $Z$-free algorithm CPP interval widths (section \ref{sec:cpp_algo_zfree}). Further details are given in section \ref{sec:empirical_quadnoise}.}
\label{fig:ex_quadnoise}
\end{figure}

\section{Background on conformal prediction}\label{sec:bg_conformal}

\paragraph{General notation}

Throughout this paper, we shall write random objects (both real-valued random variables and random vectors) by upper-case letters (e.g., $U$), non-random realizations as lower-case letters (e.g., $u$), and $n$-tuples (or \term{lists}, or \term{collections}) of random objects as bold-faced upper-case letters (e.g., $\U_{n} \defeq (U_{1},\ldots,U_{n})$). When we use the union operator $\cup$ for such lists, we simply concatenate the lists. For indexing purposes, write $[k] \defeq \{1,\ldots,k\}$ to denote the set of all positive integers no greater than $k$. When we want to denote a particular set of sub-scripted elements indexed by some $\II \subset [n]$ and the order of elements does not matter, we will use standard set notation such as $\{U_{i}: i \in \II\}$, but treat this as a tuple of length $|\II|$. Singleton lists and sets will be treated identically, with common notation $\{u\}$. We use $\prr$ and $\exx$ respectively as generic notation for probability and expectation; the underlying distribution of interest will be specified when not clear from the context. We write $\HH$ for a generic hypothesis class, namely a set of elements from which learning algorithms choose a particular final candidate in a data-driven manner. For any $x > 0$, the ceiling function $\lceil x \rceil$ returns the smallest integer greater than or equal to $x$. Similarly, the floor function $\lfloor x \rfloor$ returns the largest integer no greater than $x$.

\paragraph{Quantile background}

For an arbitrary random variable $U$ with distribution function $F(u) \defeq \prr\left\{ U \leq u \right\}$, we define the $\alpha$-level \term{quantile} of $U$ by
\begin{align*}
\qnt_{\alpha}[U] \defeq \inf \left\{ u \in \RR: F(u) \geq \alpha \right\}.
\end{align*}
Given $n$ observations $\U_{n} = (U_{1},\ldots,U_{n})$, we denote the $\alpha$-level empirical quantile by
\begin{align*}
\qnt_{\alpha}\left[ \U_{n} \right] \defeq \inf \left\{ u \in \RR: \widehat{F}_{n}(u) \geq \alpha \right\},
\end{align*}
where $\widehat{F}_{n}(u) \defeq (1/n)\sum_{i=1}^{n} I_{\{ U_{i} \leq u\}}$ is the empirical distribution function induced by the sample. Plugging in $u = \qnt_{\alpha}[\U_{n}]$, by the definition of the quantile function, we have
\begin{align}\label{eqn:quantile_ecdf_lowerbd}
\widehat{F}_{n}\left( \qnt_{\alpha}[\U_{n}] \right) \geq \alpha.
\end{align}
Taking expectation with respect to the sample we have
\begin{align*}
\exx \widehat{F}_{n}(u) = \frac{1}{n} \sum_{i=1}^{n} \prr\left\{ U_{i} \leq u \right\},
\end{align*}
and thus using (\ref{eqn:quantile_ecdf_lowerbd}), this implies
\begin{align}\label{eqn:quantile_sum_lowerbd}
\frac{1}{n} \sum_{i=1}^{n} \prr\left\{ U_{i} \leq \qnt_{\alpha}[\U_{n}] \right\} \geq \alpha.
\end{align}
It is typical to assume that the distribution function $F$ is right-continuous, i.e., that $F(u^{+}) = F(u)$ for all $u$, where we use the standard notation $F(u^{+}) \defeq \lim_{a \downarrow u} F(a)$ \citep[Ch.~1.4]{ash2000a}. When $F$ is continuous, then $F(u) = \prr\{ U \leq u \} = \prr\{ U < u \}$ for all $u$. Otherwise, we may have $\prr\{ U < u \} = F(u^{-}) < F(u)$, where $F(u^{-}) \defeq \lim_{a \uparrow u} F(a)$. In this case, it makes sense to discriminate between $F$ and $F^{-}$, where we denote $F^{-}(u) \defeq F(u^{-}) = \prr\{ U < u \}$. The empirical right-quantile function then is $\widehat{F}_{n}^{-}(u) \defeq (1/n) \sum_{i=1}^{n} I_{\{U_{i} < u\}}$. If we define analogous \term{right-quantiles} as
\begin{align*}
\qnt_{\alpha}^{-}[U] & \defeq \sup \left\{ u \in \RR: F^{-}(u) \leq \alpha \right\}\\
\qnt_{\alpha}^{-}[\U_{n}] & \defeq \sup \left\{ u \in \RR: \widehat{F}_{n}^{-}(u) \leq \alpha \right\},
\end{align*}
note that it follows immediately that $\widehat{F}_{n}^{-}(\qnt_{\alpha}^{-}[\U_{n}]) \leq \alpha$ for any sample $\U_{n}$, and thus taking expectation over the sample it follows that
\begin{align}\label{eqn:quantile_sum_upperbd}
\frac{1}{n}\sum_{i=1}^{n} \prr\left\{ U_{i} < \qnt_{\alpha}^{-}[\U_{n}] \right\} \leq \alpha.
\end{align}

It is often convenient to express empirical quantiles in terms of their order (sorted in either ascending or descending order). For ascending order, write $U_{(k,n)}$ to denote the $k$th-\term{smallest} element of $\U_{n}$. For descending order, write $U_{[k,n]}$ to denote the $k$th-\term{largest} element of $\U_{n}$. This notation implies that
\begin{align*}
U_{(1,n)} \leq U_{(2,n)} \leq \ldots \leq U_{(n,n)}\\
U_{[1,n]} \geq U_{[2,n]} \geq \ldots \geq U_{[n,n]}.
\end{align*}
Note that for any $1 \leq k \leq n$, the $k$th-smallest element is the $(n-k+1)$th-largest element. This leads us to a convenient algebraic relation between the quantities of interest, namely
\begin{align}
\label{eqn:quantile_order_relation}
\qnt_{\alpha}\left[ \U_{n} \right] & = U_{(\lceil n\alpha \rceil, n)} = U_{[n-\lceil n\alpha \rceil+1,n]}\\
\label{eqn:rightquantile_order_relation}
\qnt_{\alpha}^{-}\left[ \U_{n} \right] & = U_{(\lfloor n\alpha \rfloor+1, n)} = U_{[n-\lfloor n\alpha \rfloor,n]}
\end{align}
for any choice of $0 < \alpha < 1$.

\paragraph{Conformal prediction background}

The general-purpose methodology known as \term{conformal prediction} is concerned with quantifying and (to some extent) characterizing the uncertainty involved in making predictions based on data. This methodology is rooted in simple but very useful statistical principles, and we briefly describe some key points here. Of chief importance is the fact that all sorts of desirable properties are known for the empirical quantiles of ``exchangeable'' data. Letting $\U_{n} = (U_{1},\ldots,U_{n})$ be our data (a collection of random variables) for the moment, we say that these random variables are \term{exchangeable} if for any permutation $\pi$, the joint distribution of $U_{\pi(1)},\ldots,U_{\pi(n)}$ is the same as that of the original random variables. This clearly holds for iid data, but exchangeability is a weaker condition \citep{vovk2005ALRW,shafer2008a}. Recalling our preliminary look at quantiles, from (\ref{eqn:quantile_order_relation}), we see that for \emph{any} collection of data (regardless of exchangeability), for each $i \in [n]$ we can say
\begin{align}\label{eqn:quantile_order_iff}
U_{i} \leq \qnt_{\alpha}\left[\U_{n}\right] \iff U_{i} \leq U_{(\lceil n\alpha \rceil,n)}.
\end{align}
In general, if the nature of each $U_{i}$ differs with $i$, then the probability that this event occurs may differ greatly depending on the choice of $i$. However, when we have exchangeable data, it follows directly from the definition of exchangeability that for any $1 \leq k \leq n$, the event $\{U_{i} \leq U_{(k,n)}\}$ has the same probability for every $i \in [n]$. Taking this fact along with (\ref{eqn:quantile_sum_lowerbd}), it immediately follows that for any $0 < \alpha < 1$ and $i \in [n]$, we have
\begin{align*}
\prr\left\{ U_{i} \leq \qnt_{\alpha}\left[ \U_{n} \right] \right\} \geq \alpha.
\end{align*}
Furthermore, we can make even stronger statements when we eliminate ``ties,'' such that
\begin{align*}
\prr\left\{ U_{(1,n)} < U_{(2,n)} < \ldots < U_{(n,n)} \right\} = 1.
\end{align*}
If the joint distribution is continuous, then $\prr\{U_{i} = U_{j}\} = 0$ whenever $i \neq j$, so the random variables are almost surely distinct. Sorting exchangeable data then amounts to randomly placing indices in $n$ ``buckets,'' meaning that for any $i \in [n]$, there are $n$ mutually exclusive events with probability $\prr\{ U_{i} = U_{(k,n)} \} = 1/n$, one for each $k \in [n]$. It immediately follows that
\begin{align*}
\prr\left\{ U_{i} \leq \qnt_{\alpha}\left[\U_{n}\right] \right\} = \prr\left\{ U_{i} \leq U_{(\lceil n\alpha \rceil,n)} \right\} = \frac{\lceil n\alpha \rceil}{n} \leq \alpha + \frac{1}{n}.
\end{align*}
Organizing these facts, we have the following result.
\begin{lem}[Quantile lemma, on-sample]\label{lem:quantile_onsample}
Let the random variables $U_{1},\ldots,U_{n}$ be exchangeable. Then for any choice of $0 < \alpha < 1$ and $i \in [n]$, it follows that
\begin{align*}
\prr\left\{ U_{i} \leq \qnt_{\alpha}\left[ \U_{n} \right] \right\} \geq \alpha.
\end{align*}
Furthermore, if these $n$ random variables are almost surely distinct, we also have
\begin{align*}
\prr\left\{ U_{i} \leq \qnt_{\alpha}\left[ \U_{n} \right] \right\} \leq \alpha + \frac{1}{n}.
\end{align*}
\end{lem}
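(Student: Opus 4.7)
The plan is to exploit exchangeability to turn statements about a specific index $i$ into statements about the average over all indices, so that the elementary quantile bounds (\ref{eqn:quantile_sum_lowerbd}) and (\ref{eqn:quantile_sum_upperbd}) can be applied. Everything essentially reduces to the observation that, under exchangeability, the quantity $\prr\{U_i \leq U_{(k,n)}\}$ does not depend on $i$: indeed, for any permutation $\pi$ the joint law of $(U_1,\ldots,U_n)$ equals that of $(U_{\pi(1)},\ldots,U_{\pi(n)})$, and since the order statistics $U_{(k,n)}$ are a symmetric function of the sample, they are invariant under $\pi$, while $U_i$ is mapped to $U_{\pi(i)}$. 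Choosing $\pi$ that swaps $i$ and $j$ gives $\prr\{U_i \leq U_{(k,n)}\} = \prr\{U_j \leq U_{(k,n)}\}$ for all $i,j \in [n]$.

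For the first (lower) inequality, I would start from the equivalence (\ref{eqn:quantile_order_iff}), rewriting
\begin{align*}
\prr\left\{ U_i \leq \qnt_{\alpha}[\U_n] \right\} = \prr\left\{ U_i \leq U_{(\lceil n\alpha \rceil, n)} \right\}.
\end{align*}
Call the right-hand side $p$; by exchangeability $p$ is independent of $i$. Averaging (\ref{eqn:quantile_sum_lowerbd}) over $i \in [n]$ (each summand is exactly $p$) yields $p \geq \alpha$, which is the claimed bound.

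For the second (upper) inequality, the almost-sure-distinctness hypothesis lets me decompose
\begin{align*}
\left\{ U_i \leq U_{(\lceil n\alpha \rceil, n)} \right\} = \bigsqcup_{k=1}^{\lceil n\alpha \rceil} \left\{ U_i = U_{(k,n)} \right\}
\end{align*}
into a disjoint union of ``rank'' events. Exchangeability plus distinctness force the rank of $U_i$ in the sorted sample to be uniform on $[n]$, so each of these events has probability $1/n$. Summing gives $\prr\{U_i \leq U_{(\lceil n\alpha \rceil, n)}\} = \lceil n\alpha \rceil / n$, and the elementary estimate $\lceil n\alpha \rceil \leq n\alpha + 1$ finishes the argument.

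No real obstacle is expected, since the preceding paragraph of the paper walks through exactly these ingredients; the task is mainly organizational. The one point that deserves care is verifying uniformity of the rank under exchangeability with no ties: this follows from the fact that permuting the data realizes every ranking of $(U_1,\ldots,U_n)$ with equal probability, so in particular $\prr\{U_i = U_{(k,n)}\} = 1/n$ for each $k$, without any need to assume i.i.d. data.
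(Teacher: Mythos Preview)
Your proposal is correct and follows essentially the same route as the paper: the exposition preceding the lemma already establishes that $\prr\{U_i \leq U_{(k,n)}\}$ is constant in $i$ by exchangeability, then combines this with (\ref{eqn:quantile_sum_lowerbd}) for the lower bound and with the uniform-rank argument (giving $\prr\{U_i \leq U_{(\lceil n\alpha \rceil,n)}\} = \lceil n\alpha \rceil/n$) for the upper bound. Your write-up just makes these steps explicit; only the phrase ``averaging (\ref{eqn:quantile_sum_lowerbd}) over $i$'' is slightly off, since (\ref{eqn:quantile_sum_lowerbd}) is already the average---you mean that each summand equals $p$.
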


The preceding lemma is an important basic property that holds without making any special assumptions on the data beyond exchangeability. Moving forward, the key principles underlying conformal inference are concerned with how these empirical quantiles relate to \emph{new} data. To make this concrete, we consider making predictions about an as-yet unobserved $U$, having only observed the data set $\U_{n}$. Intuitively, if we compare the sorted elements of $\U_{n}$ and $\U_{n} \cup \{U\}$, there is minimal change to the ordering. When sorted in ascending order, only the points greater than $U$ see a change in their order, and even then it is just a simple shift of $+1$. In particular, the key observation is that if $U$ is among the $k$th-smallest elements of $\U_{n}$, then it is also among the $k$th-smallest elements of $\U_{n} \cup \{U\}$ (the converse also clearly holds; see Figure \ref{fig:schematic_sorting}). Written algebraically, the key fact is that for all $1 \leq k \leq n$, we have
\begin{align}
U \leq U_{(k,n)} \iff U \leq U_{(k,n+1)},
\end{align}
where $U_{(k,n+1)}$ denotes the $k$th-smallest element of the $n+1$ points in $\U_{n} \cup \{U\}$. Using this fact and the quantile relation (\ref{eqn:quantile_order_relation}), for any $0 < \beta < 1$ we have
\begin{align}\label{eqn:quantile_offsample_bridge}
\prr\left\{ U \leq \qnt_{\beta}\left[ \U_{n} \right] \right\} = \prr\left\{ U \leq U_{(\lceil n\beta \rceil, n)} \right\} = \prr\left\{ U \leq U_{(\lceil n\beta \rceil, n+1)} \right\}.
\end{align}
Now, in order to control the right-most probability using Lemma \ref{lem:quantile_onsample}, we need to link up $U_{(\lceil n\beta \rceil, n+1)}$ with $\qnt_{\alpha}[\U_{n} \cup \{U\}]$, where $\alpha$ is an arbitrary pre-fixed confidence level, and we can freely control $\beta$. This is easily done by setting $\beta$ such that
\begin{align*}
U_{(\lceil n\beta \rceil, n+1)} = U_{(\lceil (n+1)\alpha \rceil, n+1)} = \qnt_{\alpha}[\U_{n} \cup \{U\}],
\end{align*}
noting that the second equality follows from (\ref{eqn:quantile_order_relation}). This holds if we set $\beta = (1+1/n)\alpha$. Taking this back to (\ref{eqn:quantile_offsample_bridge}), we get
\begin{align*}
\prr\left\{ U \leq \qnt_{\beta}\left[ \U_{n} \right] \right\} = \prr\left\{ U \leq \qnt_{\alpha}[\U_{n} \cup \{U\}] \right\}.
\end{align*}
Applying Lemma \ref{lem:quantile_onsample} then immediately yields the following result.
\begin{lem}[Quantile lemma, off-sample]\label{lem:quantile_offsample}
Let the random variables $U_{1},\ldots,U_{n}$ and $U$ all be exchangeable. Then for any choice of $0 < \alpha < 1$, setting $\alpha_{n} \defeq (1+1/n)\alpha$, it follows that
\begin{align*}
\prr\left\{ U \leq \qnt_{\alpha_{n}}\left[ \U_{n} \right] \right\} \geq \alpha.
\end{align*}
Furthermore, if these $n+1$ random variables are almost surely distinct, we also have
\begin{align*}
\prr\left\{ U \leq \qnt_{\alpha_{n}}\left[ \U_{n} \right] \right\} \leq \alpha + \frac{1}{n+1}.
\end{align*}
\end{lem}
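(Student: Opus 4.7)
The plan is to reduce the off-sample statement to the on-sample Lemma \ref{lem:quantile_onsample} applied to the augmented collection $\U_n \cup \{U\}$, which is itself exchangeable of size $n+1$. The price of this reduction is a small inflation of the confidence level from $\alpha$ to $\alpha_n = (1+1/n)\alpha$ when evaluating the quantile against the original sample $\U_n$.

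First I would record the purely combinatorial observation that inserting $U$ into the order statistics of $\U_n$ leaves the identity of the $k$-th smallest element unchanged whenever $U$ already lies below it:
\begin{equation*}
U \leq U_{(k,n)} \iff U \leq U_{(k,n+1)}, \qquad 1 \leq k \leq n,
\end{equation*}
where $U_{(k,n+1)}$ denotes the $k$-th smallest element of $\U_n \cup \{U\}$. Combined with the order-statistic representation (\ref{eqn:quantile_order_relation}) of empirical quantiles, this yields
\begin{equation*}
\prr\{ U \leq \qnt_{\beta}[\U_n] \} = \prr\{ U \leq U_{(\lceil n\beta \rceil, n+1)} \}
\end{equation*}
for every $0 < \beta < 1$.

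Next I would choose $\beta$ so that the right-hand order statistic is itself the $\alpha$-quantile of the augmented sample. By (\ref{eqn:quantile_order_relation}) applied to $\U_n \cup \{U\}$, this requires $\lceil n\beta \rceil = \lceil (n+1)\alpha \rceil$, which is achieved by $\beta = \alpha_n = (1+1/n)\alpha$ since $n(1+1/n) = n+1$. With this choice, both inequalities follow by applying Lemma \ref{lem:quantile_onsample} to the exchangeable collection $\U_n \cup \{U\}$ at confidence level $\alpha$, taking the distinguished index to be that of $U$; in particular, the $1/n$ correction in the on-sample upper bound becomes $1/(n+1)$, matching the claimed bound.

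The main obstacle is little more than bookkeeping: checking the ceiling identity $\lceil n\beta \rceil = \lceil (n+1)\alpha \rceil$ displayed above, and noting that the almost-sure-distinctness hypothesis propagates to the $n+1$ variables of $\U_n \cup \{U\}$ so that the sharper on-sample upper bound is directly applicable. In fact, the discussion preceding the lemma already carries out the whole calculation, and a formal proof just needs to assemble those observations in the order indicated.
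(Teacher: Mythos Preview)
Your proposal is correct and follows essentially the same route as the paper: the exposition preceding the lemma already establishes the equivalence $U \leq U_{(k,n)} \iff U \leq U_{(k,n+1)}$, then chooses $\beta = (1+1/n)\alpha$ so that $\lceil n\beta\rceil = \lceil (n+1)\alpha\rceil$, and finishes by invoking Lemma~\ref{lem:quantile_onsample} on the augmented sample $\U_n \cup \{U\}$, exactly as you outline. The only nitpick is that the almost-sure-distinctness assumption is already stated for all $n+1$ variables, so there is nothing to ``propagate''; otherwise your write-up matches the paper's argument step for step.
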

\noindent Unlike Lemma \ref{lem:quantile_onsample}, which only dealt with properties of the sample $\U_{n}$, here Lemma \ref{lem:quantile_offsample} allows us to construct confidence intervals for as-yet unobserved $U$, based on only the data $\U_{n}$, under equally weak assumptions on the underlying data distribution. This is a very important conceptual difference. In the context of conformal prediction, the lower bound in the preceding lemma proves the \term{validity} of the $\alpha$-level prediction interval for $U$ constructed using $\qnt_{\alpha_{n}}\left[ \U_{n} \right]$, whereas the upper bound proves that the predictions are well-\term{calibrated}.\footnote{See \citet{vovk2005ALRW} or \citet{shafer2008a} for more background on these terms and concepts.} We remark that these basic properties of quantiles are well-known in the literature.\footnote{For example, see the supplementary materials of \citet{tibshirani2020a} and \citet{romano2020a}.}

\begin{figure}[t]
\centering
\includegraphics[width=0.4\textwidth]{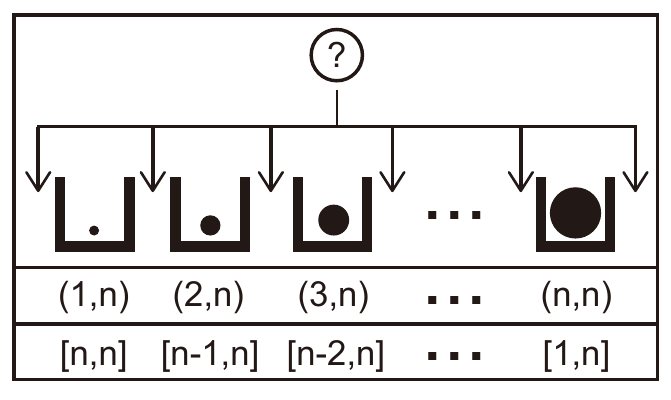}
\caption{A schematic for illustrating the simple ideas underlying conformal inference. The items in buckets (corresponding to the $\U_{n}=(U_{1},\ldots,U_{n})$ elements) have been sorted from left to right in ascending order, and the pairs $(k,n)$ and $[k,n]$ respectively denote the $k$th-smallest and $k$th-largest element of $n$ elements total, and the question mark represents a new point $U$. Visual inspection makes it clear that both $U \leq U_{(k,n)} \iff U \leq U_{(k,n+1)}$ and $U \geq U_{[k,n]} \iff U \leq U_{[k,n+1]}$ are valid statements for any $1 \leq k \leq n$.}
\label{fig:schematic_sorting}
\end{figure}

\paragraph{Conformal regression}

The most common application of the conformal prediction methodology is to ``regression'' problems, where our observed data take the form of $(\text{input},\text{response})$ pairs $Z_{i} = (X_{i},Y_{i})$. The usual setup has some underlying algorithm $\algo$ which, given a dataset $\Z_{n} = (Z_{1},\ldots,Z_{n})$ returns a predictor $\widehat{h}_{n} = \algo(\Z_{n})$ such that hopefully $\widehat{h}_{n}(X) \approx Y$ is an accurate approximation given a new $(X,Y)$ pair. The traditional approach involves constructing a ``score'' function $s(h;z)$ to evaluate the quality of any predictor $h$, and to consider \term{non-conformity scores} defined for each $z \in \ZZ$ by
\begin{align*}
S^{\prime}(z) & \defeq s(\algo(\Z_{n} \cup \{z\});z)\\
S_{i}(z) & \defeq s(\algo(\Z_{n} \cup \{z\});Z_{i}).
\end{align*}
Both $S^{\prime}(z)$ and $S_{i}(z)$ are used to evaluate the on-sample quality of the predictor returned by $\algo$ after being fed the same sample (which depends on the variable $z$). The only difference is the point at which the score is evaluated. Denoting the list of scores $\S_{n}(z) \defeq (S_{1}(z),\ldots,S_{n}(z))$, the traditional approach involves constructing a prediction set of the form
\begin{align}\label{eqn:cchat_cp_full}
\cchat_{\alpha}(x) \defeq \left\{ y \in \RR: S^{\prime}(x,y) \leq \qnt_{1-\alpha}\left[ \S_{n}(x,y) \cup \{\infty\} \right] \right\}.
\end{align}
Since both $S^{\prime}(\cdot)$ and the $S_{i}(\cdot)$ for each $i \in [n]$ can be evaluated based on the original $n$-sized sample, in \emph{principle} it is always possible to check the condition characterizing the set defined in (\ref{eqn:cchat_cp_full}). In practice, however, since doing so requires a full re-running of $\algo$ for each change to $x$ on the left-hand side and each candidate $y$ being checked in the set condition, this should be considered an ideal, intractable quantity. Utilizing the basic properties of empirical quantiles as discussed earlier, it can be shown that $\cchat_{\alpha}(X)$ as constructed in (\ref{eqn:cchat_cp_full}) provides us with a valid and well-calibrated predictive set for the response $Y$, as summarized in the following theorem.
\begin{thm}[Conformal regression]\label{thm:cp_regression_full}
Let the data $Z_{1},\ldots,Z_{n}$ and $Z$ all be exchangeable, and let $\cchat_{\alpha}(\cdot)$ be as in (\ref{eqn:cchat_cp_full}), for $0 < \alpha < 1$. It follows that
\begin{align*}
\prr\left\{ Y \in \cchat_{\alpha}(X)  \right\} \geq 1-\alpha.
\end{align*}
Furthermore, when the realizations of the non-conformity scores $S_{1}(Z),\ldots,S_{n}(Z)$ and $S^{\prime}(Z)$ are almost surely distinct, we also have that
\begin{align*}
\prr\left\{ Y \in \cchat_{\alpha}(X)  \right\} \leq 1-\alpha + \frac{1}{n+1}.
\end{align*}
\end{thm}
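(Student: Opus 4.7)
The plan is to translate the event $Y \in \cchat_{\alpha}(X)$ into a statement about the $(1-\alpha)$-quantile of $n+1$ exchangeable random variables, and then invoke Lemma \ref{lem:quantile_onsample}. Set $Z \defeq (X,Y)$ and define
\begin{align*}
U_{i} \defeq S_{i}(Z) = s\bigl(\algo(\Z_{n} \cup \{Z\});\, Z_{i}\bigr) \ \text{for } i \in [n], \qquad U_{n+1} \defeq S^{\prime}(Z).
\end{align*}
The critical preliminary step is to verify that $(U_{1},\ldots,U_{n+1})$ is exchangeable. This holds because $\algo$ acts on its training argument as an unordered collection, so the fitted predictor $\algo(\Z_{n} \cup \{Z\})$ is invariant under any permutation of $(Z_{1},\ldots,Z_{n},Z)$; permuting the evaluation points then permutes the $U_{i}$'s in lockstep with the data, and exchangeability of the data transfers to the scores. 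I expect this symmetry/invariance check to be the main conceptual obstacle; once it is in hand, the rest of the proof is essentially mechanical.

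For the validity bound, note that $Y \in \cchat_{\alpha}(X)$ is exactly the event $U_{n+1} \leq \qnt_{1-\alpha}\bigl[(U_{1},\ldots,U_{n}) \cup \{\infty\}\bigr]$. Since $\infty$ is at least as large as $U_{n+1}$, replacing it by $U_{n+1}$ can only decrease the $(1-\alpha)$-quantile, so
\begin{align*}
\qnt_{1-\alpha}\bigl[(U_{1},\ldots,U_{n}) \cup \{\infty\}\bigr] \geq \qnt_{1-\alpha}[\U_{n+1}].
\end{align*}
Applying Lemma \ref{lem:quantile_onsample} to the exchangeable collection $\U_{n+1}$ at index $i = n+1$ gives $\prr\{U_{n+1} \leq \qnt_{1-\alpha}[\U_{n+1}]\} \geq 1-\alpha$, which yields the lower bound $\prr\{Y \in \cchat_{\alpha}(X)\} \geq 1-\alpha$.

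For the calibration upper bound, assume the scores $S_{1}(Z),\ldots,S_{n}(Z),S^{\prime}(Z)$ are almost surely distinct. By (\ref{eqn:quantile_order_relation}) and the fact that $\infty$ is the largest element, if $k \defeq \lceil (n+1)(1-\alpha) \rceil$ satisfies $k \leq n$, then $\qnt_{1-\alpha}\bigl[(U_{1},\ldots,U_{n}) \cup \{\infty\}\bigr] = U_{(k,n)}$; while if $k = n+1$, the quantile equals $\infty$ and the event holds trivially, matching the bound $k/(n+1) = 1$. In the nontrivial case, the equivalence $U_{n+1} \leq U_{(k,n)} \iff U_{n+1} \leq U_{(k,n+1)}$ recasts the event as ``$U_{n+1}$ has rank at most $k$ among the $n+1$ a.s.-distinct, exchangeable variables.'' By exchangeability this rank is uniform on $\{1,\ldots,n+1\}$, so the probability equals $k/(n+1) \leq (1-\alpha) + 1/(n+1)$, which completes the argument.
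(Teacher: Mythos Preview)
The paper does not actually supply its own proof of this theorem; it states the result and attributes the lower bound to \citet{vovk2005ALRW} and the upper bound to \citet{lei2018a}. Your argument is correct and is precisely the standard route these references take, expressed in the language of the paper's quantile lemmas: reduce $Y \in \cchat_{\alpha}(X)$ to a rank statement about $U_{n+1}$ among $n+1$ exchangeable scores, then invoke Lemma~\ref{lem:quantile_onsample} (and the off-sample equivalence $U \leq U_{(k,n)} \iff U \leq U_{(k,n+1)}$ spelled out just before Lemma~\ref{lem:quantile_offsample}) for both directions.

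One small remark: you correctly flag that exchangeability of the scores requires $\algo$ to treat its training input symmetrically. The paper leaves this implicit in Theorem~\ref{thm:cp_regression_full} (recall that the paper's $\cup$ is list concatenation, not a set operation), though it makes the assumption explicit later as condition~(\ref{eqn:algo_symmetry}) for the algorithm-CPP theorems. Your identification of this as ``the main conceptual obstacle'' is apt, and your handling of the edge case $k = n+1$ in the calibration bound is clean.
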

\noindent As with the quantile lemmas discussed earlier, this result is remarkable due to its generality. Nothing special is assumed about the score function $s(h;z)$, the algorithm $\algo$, or the data-generating process beyond exchangeability. The lower bound on the coverage probability here is a foundational result of conformal inference; see \citet{vovk2005ALRW}. The upper bound is due to \citet{lei2018a}. The ``almost surely distinct'' requirement can be easily removed if we have some randomized sub-routine for breaking ties. Exchangeability assumptions can be further weakened using a slightly different procedure; see \citet{tibshirani2020a}.

\paragraph{Split conformal regression}

In order to make the conformal prediction approach more practical, the most obvious strategy is to simply split the sample. Given $\Z_{n} = (Z_{1},\ldots,Z_{n})$ to start with, split the index $[n] = \II_{\TR} \cup \II_{\CP}$, with $\II_{\TR}$ for training, and $\II_{\CP}$ for the conformal prediction sub-routine. Write $\Z_{\TR}$ and $\Z_{\CP}$ for $\Z_{n}$ sub-indexed by $\II_{\TR}$ and $\II_{\CP}$ respectively (order of sub-lists is assumed random). The \term{split conformal prediction} approach then uses a simple non-conformity score
\begin{align*}
S(x,y) \defeq s(\algo(\Z_{\TR});x,y)
\end{align*}
which will then be evaluated as $S(X_{i},Y_{i})$ for each $i \in \II_{\CP}$. To denote these scores compactly, we write $\S_{\CP} \defeq \{S(X_{i},Y_{i}): i \in \II_{\CP}\}$. Note that if the data is exchangeable, then so long as the algorithm $\algo$ does not pay attention to the data index (or randomly shuffles the data points before execution), it immediately follows that the non-conformity scores $\S_{\CP}$ and $S(X,Y)$ evaluated at a new point $(X,Y)$ are also exchangeable. Recalling the off-sample quantile property given in Lemma \ref{lem:quantile_offsample}, if we set $\alpha_{\CP} \defeq (1+1/|\II_{\CP}|)(1-\alpha)$, then dealing with ties appropriately we obtain validity and calibration guarantees for data-driven prediction of as-yet unobserved non-conformity scores:
\begin{align*}
1-\alpha \leq \prr\left\{ S(X,Y) \leq \qnt_{\alpha_{\CP}}\left[ \S_{\CP} \right] \right\} \leq 1-\alpha + \frac{1}{|\II_{\CP}|+1}.
\end{align*}
One can then readily reverse-engineer a prediction set for the response $Y$ by defining
\begin{align}\label{eqn:cchat_cp_split}
\cchat_{\alpha}(x) \defeq \left\{ y \in \RR: S(x,y) \leq \qnt_{\alpha_{\CP}}\left[ \S_{\CP} \right] \right\},
\end{align}
to which the validity and calibration guarantees apply as-is. These facts are summarized in the following theorem.
\begin{thm}[Split conformal regression]\label{thm:cp_regression_split}
Let the data $Z_{1},\ldots,Z_{n}$ and $Z$ all be exchangeable, and let $\cchat_{\alpha}(\cdot)$ be as in (\ref{eqn:cchat_cp_split}), for $0 < \alpha < 1$. It follows that
\begin{align*}
\prr\left\{ Y \in \cchat_{\alpha}(X)  \right\} \geq 1-\alpha.
\end{align*}
Furthermore, when the realizations of the non-conformity scores $S(X_{i},Y_{i})$ (for $i \in \II_{\CP}$) and $S(X,Y)$ are almost surely distinct, we also have that
\begin{align*}
\prr\left\{ Y \in \cchat_{\alpha}(X)  \right\} \leq 1-\alpha + \frac{1}{|\II_{\CP}|+1}.
\end{align*}
\end{thm}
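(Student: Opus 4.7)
The plan is to reduce the theorem to a direct application of Lemma \ref{lem:quantile_offsample} (the off-sample quantile lemma) to the non-conformity scores, after establishing that these scores are exchangeable.

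First, I would condition on the training sub-sample $\Z_{\TR}$. Once $\Z_{\TR}$ is fixed, the function $(x,y) \mapsto S(x,y) = s(\algo(\Z_{\TR});x,y)$ is a deterministic function of $(x,y)$ that does not depend on which calibration or test point it is applied to. Since $Z_{1},\ldots,Z_{n}$ and $Z$ are exchangeable, the conditional distribution of the sub-collection $\{Z_{i}: i \in \II_{\CP}\} \cup \{Z\}$ given $\Z_{\TR}$ is exchangeable (this is the step where the remark in the text about $\algo$ not paying attention to the data index is used: the split must be index-blind, so that the sub-sample structure is preserved under permutations). Applying a common measurable function $S$ coordinate-wise to an exchangeable tuple yields another exchangeable tuple, so $\S_{\CP} \cup \{S(X,Y)\}$ is exchangeable, conditionally on $\Z_{\TR}$.

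Next, I would invoke Lemma \ref{lem:quantile_offsample} with $n$ replaced by $n_{\CP} \defeq |\II_{\CP}|$ and with the role of the generic off-sample variable $U$ played by $S(X,Y)$. Choosing the confidence level to be $1-\alpha$ in that lemma, the inflated level is exactly
\begin{align*}
(1 + 1/n_{\CP})(1-\alpha) = \alpha_{\CP},
\end{align*}
so the lemma yields, conditionally on $\Z_{\TR}$,
\begin{align*}
1-\alpha \le \prr\!\left\{ S(X,Y) \le \qnt_{\alpha_{\CP}}[\S_{\CP}] \mid \Z_{\TR} \right\} \le 1-\alpha + \frac{1}{n_{\CP}+1},
\end{align*}
where the upper bound is available precisely when the scores are almost surely distinct (which transfers from the distinctness hypothesis via $S$). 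Marginalising over $\Z_{\TR}$ preserves both bounds by the tower property.

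Finally, I would translate these bounds from scores to responses via the defining equivalence of the prediction set. By the definition of $\cchat_{\alpha}$ in (\ref{eqn:cchat_cp_split}), the event $\{Y \in \cchat_{\alpha}(X)\}$ is literally the event $\{S(X,Y) \le \qnt_{\alpha_{\CP}}[\S_{\CP}]\}$, so the previous display is the claim of the theorem. The only real obstacle is the exchangeability bookkeeping in the first step — specifically, making sure the train/calibration split is performed in a way that respects permutations (random or index-blind), so that the scores produced from a fixed $\algo(\Z_{\TR})$ inherit exchangeability from the underlying data. Once that is in place, the rest is a mechanical invocation of Lemma \ref{lem:quantile_offsample}.
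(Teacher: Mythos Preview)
Your proposal is correct and follows essentially the same argument as the paper, which presents the proof informally in the text immediately preceding the theorem rather than in the appendix: exchangeability of the data (plus an index-blind split) yields exchangeability of the scores $\S_{\CP} \cup \{S(X,Y)\}$, then Lemma~\ref{lem:quantile_offsample} applied at level $1-\alpha$ with $n$ replaced by $|\II_{\CP}|$ gives the two-sided bound on $\prr\{S(X,Y) \le \qnt_{\alpha_{\CP}}[\S_{\CP}]\}$, and finally the definition~(\ref{eqn:cchat_cp_split}) makes this event identical to $\{Y \in \cchat_{\alpha}(X)\}$. Your explicit conditioning on $\Z_{\TR}$ followed by marginalisation is a slightly more careful way of stating what the paper leaves implicit, but the route is the same.
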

\noindent Note the immediate computational advantage: we only have to run $\algo$ on $\Z_{\TR}$ once, in contrast with the full conformal prediction approach that requires re-running $\algo$ for every candidate to be checked for inclusion in the set. This simple trick to make practical applications of conformal prediction feasible is well-known in the literature; see \citet{papadopoulos2002a,vovk2005ALRW} for early work, and \citet{lei2018a} for more recent analysis and discussion. The price for this efficiency is weaker calibration guarantees (since $|\II_{\CP}| < n$), and more coarse-grained prediction sets. The conformal prediction method can be applied to a much richer class of prediction tasks than just constructing predictive sets. One lucid example is the problem of constructing valid predictive distributions; see \citet{vovk2018a} and \citet{vovk2019a}.

\section{Conformal performance prediction}\label{sec:cpp}

As introduced in section \ref{sec:intro}, our purpose is to investigate the use of conformal prediction techniques to provide answers to a new class of questions regarding the performance of learning algorithms. We call our general approach \term{conformal performance prediction} (CPP). The key elements of our formulation are as follows:
\begin{itemize}
\item Desired confidence parameter $0 < \alpha < 1$.
\item Random sample $\Z_{n} = (Z_{1},\ldots,Z_{n})$, used for both training and calibration.
\item Learning algorithm $\algo$. When passed a sample, $\algo$ returns an element of $\HH$.
\item Loss function $\loss(h;z)$, defined for all $h \in \HH$ and $z \in \ZZ$.
\item As-yet unobserved random data point $Z$.
\item As-yet unobserved random sample $\Z$ (size to be specified later).
\item Predictive set $\widehat{\EE}_{\alpha}$, which aims to include as-yet unseen losses $(1-\alpha) \times 100\%$ of the time.
\end{itemize}
Given a one-sentence summary, CPP is the construction of predictive sets $\widehat{\EE}_{\alpha}$ for the \emph{loss to be incurred} by algorithm $\algo$, which are provably valid under exchangeable data. The framework allows for several variations in terms of how we define the loss to be incurred, and how we construct the predictive set $\widehat{\EE}_{\alpha}$. These details will be described in the following sub-sections.

\subsection{Warm-up: CPP for candidates}\label{sec:cpp_candidate}

To begin, the simplest notion of a ``loss to be incurred by $\algo$'' is that of the loss incurred by the candidate chosen by $\algo$ given a \emph{particular} sample. That is, we make a disjoint partition of the sample $\Z_{n}$ using sub-indices $[n] = \II_{\TR} \cup \II_{\CP}$, and denoting $\algo_{\TR} \defeq \algo(\Z_{\TR})$, we desire a predictive set such that $\loss(\algo_{\TR};Z) \in \widehat{\EE}_{\alpha}$ with probability at least $1-\alpha$, over the random draw of both $\Z_{n}$ and $Z$. This is essentially nothing more than traditional conformal prediction with the focused shifted to the scoring mechanism. As such, we can easily utilize the same basic principles underlying split conformal prediction, as we describe below.

If all we want is a one-sided interval, note that if we set $\alpha_{\CP} \defeq (1+1/|\II_{\CP}|)(1-\alpha)$, then up to exchangeability assumptions, Lemma \ref{lem:quantile_offsample} immediately implies
\begin{align}\label{eqn:cpp_candidate_onesided_lb}
\prr\left\{ \loss(\algo_{\TR};Z) \leq \qnt_{\alpha_{\CP}}\left[ \losses_{\CP} \right] \right\} \geq 1-\alpha,
\end{align}
where we write $\losses_{\CP} \defeq \{ \loss(\algo_{\TR};Z_{i}) : i \in \II_{\CP} \}$ for the losses incurred by $\algo_{\TR}$ on the calibration data. Regarding exchangeability, for (\ref{eqn:cpp_candidate_onesided_lb}) to hold, the losses $\loss(\algo_{\TR};Z_{i})$ (for $i \in \II_{\CP}$) and $\loss(\algo_{\TR};Z)$ must all be exchangeable. Since $\II_{\TR}$ and $\II_{\CP}$ are disjoint, using the pointwise form of the losses, this follows immediately if the data $\Z_{\CP}$ and $Z$ are exchangeable, regardless of the nature of algorithm $\algo$. Furthermore, if the losses are almost surely distinct over the random draw of $\Z_{n}$, then Lemma \ref{lem:quantile_offsample} also implies
\begin{align*}%\label{eqn:cpp_candidate_onesided_ub}
\prr\left\{ \loss(\algo_{\TR};Z) \leq \qnt_{\alpha_{\CP}}\left[ \losses_{\CP} \right] \right\} \leq 1-\alpha + \frac{1}{|\II_{\CP}|+1}.
\end{align*}

In practice, it may be more useful to have two-sided prediction intervals, and this can be easily achieved using a straightforward extension of split conformal prediction. With desired confidence parameter $\alpha$ in hand, set the adjusted low/high quantiles to be
\begin{align}\label{eqn:cpp_candidate_alphas}
\alpha_{\textup{lo}} \defeq \frac{\alpha}{2} - \frac{(1-\alpha/2)}{|\II_{\CP}|}, \quad \alpha_{\textup{hi}} \defeq \left(1+\frac{1}{|\II_{\CP}|}\right)\left(1-\frac{\alpha}{2}\right)
\end{align}
and then create a predictive interval as
\begin{align}\label{eqn:cpp_candidate_predset}
\widehat{\EE}_{\alpha}(\algo_{\TR}) \defeq \left\{ u \in \RR: \qnt_{\alpha_{\textup{lo}}}\left[ \losses_{\CP} \right] \leq u \leq \qnt_{\alpha_{\textup{hi}}}\left[ \losses_{\CP} \right] \right\},
\end{align}
where we write $\losses_{\CP} \defeq \{ \loss(\algo_{\TR};Z_{i}) : i \in \II_{\CP} \}$ for the losses incurred by $\algo_{\TR}$ on the calibration data. The basic procedure is summarized in Algorithm \ref{algo:cpp_candidate}. Using the key quantile lemma for off-sample prediction, we can readily prove the desired validity of this CPP set (the proof of this, and all subsequent results, is given in appendix \ref{sec:apdx_proofs}).
\begin{thm}[Candidate CPP]\label{thm:cpp_candidate}
Let the data $Z_{1},\ldots,Z_{n}$ and $Z$ all be exchangeable. Then, setting $\widehat{\EE}_{\alpha}$ as in (\ref{eqn:cpp_candidate_predset}), for any choice of $0 < \alpha < 1$, we have that
\begin{align*}
\prr\left\{ \loss(\algo_{\TR};Z) \in \widehat{\EE}_{\alpha}(\algo_{\TR}) \right\} \geq 1-\alpha.
\end{align*}
Furthermore, if the losses $\loss(\algo_{\TR};Z_{i})$ (for $i \in \II_{\CP}$) and $\loss(\algo_{\TR};Z)$ are almost surely distinct, then we also have
\begin{align*}
\prr\left\{ \loss(\algo_{\TR};Z) \in \widehat{\EE}_{\alpha}(\algo_{\TR}) \right\} \leq 1-\alpha + \frac{2}{|\II_{\CP}|+1}.
\end{align*}
\end{thm}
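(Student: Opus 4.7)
The plan is to reduce the two-sided CPP statement to two one-sided applications of Lemma \ref{lem:quantile_offsample} and then combine them by a union bound.

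First, I would establish exchangeability of the losses. Writing $m \defeq |\II_{\CP}|$, exchangeability of $(Z_{1},\ldots,Z_{n},Z)$ is in particular invariant under any permutation that fixes the indices in $\II_{\TR}$, so the sub-tuple $(\{Z_{i} : i \in \II_{\CP}\}, Z)$ is exchangeable conditional on $\Z_{\TR}$. Because $\algo_{\TR}$ is a deterministic function of $\Z_{\TR}$, applying the map $z \mapsto \loss(\algo_{\TR};z)$ componentwise preserves this conditional exchangeability, so $(\losses_{\CP}, \loss(\algo_{\TR};Z))$ is an exchangeable collection of size $m+1$ given $\Z_{\TR}$. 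All probability bounds below are derived conditionally on $\Z_{\TR}$ and then integrated.

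For the upper tail, note that $\alpha_{\textup{hi}} = (1+1/m)(1-\alpha/2)$ is exactly the adjusted level prescribed by Lemma \ref{lem:quantile_offsample} when the target confidence is $1-\alpha/2$. Direct application gives the validity bound $\prr\{\loss(\algo_{\TR};Z) \leq \qnt_{\alpha_{\textup{hi}}}[\losses_{\CP}]\} \geq 1-\alpha/2$, and the second part of that lemma gives $\leq 1-\alpha/2 + 1/(m+1)$ under a.s.\ distinctness. For the lower tail, I apply the same lemma to the negated losses $-\loss(\algo_{\TR};Z_{i})$ and $-\loss(\algo_{\TR};Z)$ (still exchangeable) at level $1-\alpha/2$, and then rewrite the resulting threshold in terms of order statistics of the original losses using the relations (\ref{eqn:quantile_order_relation}) and (\ref{eqn:rightquantile_order_relation}). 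Matching indices confirms that this threshold coincides with $\qnt_{\alpha_{\textup{lo}}}[\losses_{\CP}]$ for the particular choice in (\ref{eqn:cpp_candidate_alphas}), producing $\prr\{\loss(\algo_{\TR};Z) \geq \qnt_{\alpha_{\textup{lo}}}[\losses_{\CP}]\} \geq 1-\alpha/2$, together with a companion bound $\prr\{\loss(\algo_{\TR};Z) < \qnt_{\alpha_{\textup{lo}}}[\losses_{\CP}]\} \leq \alpha/2 + 1/(m+1)$ under distinctness.

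A union bound on the two tail complements then yields $\prr\{\loss(\algo_{\TR};Z) \in \widehat{\EE}_{\alpha}(\algo_{\TR})\} \geq 1 - \alpha/2 - \alpha/2 = 1-\alpha$, and the matching inclusion-exclusion bound, picking up $1/(m+1)$ from each tail under distinctness, gives the stated $1-\alpha + 2/(m+1)$. The main obstacle is the index bookkeeping in the lower-tail step: one has to verify that the slightly unusual choice $\alpha_{\textup{lo}} = \alpha/2 - (1-\alpha/2)/m$ lines up with the threshold produced by the negated-loss application, with possible slack in degenerate integer cases and in the regime $\alpha_{\textup{lo}} \leq 0$ (where $\qnt_{\alpha_{\textup{lo}}} = -\infty$ makes the lower constraint vacuous), neither of which endangers validity.
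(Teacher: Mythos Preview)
Your approach is correct in spirit and genuinely different from the paper's. You reduce the two-sided statement to two invocations of Lemma~\ref{lem:quantile_offsample} (one on the losses, one on the negated losses) and then combine by a union bound. The paper instead first proves an \emph{on-sample} two-sided bound
\[
\prr\bigl\{\qnt_{\alpha/2}[\U_{n}] \leq U_{i} \leq \qnt_{1-\alpha/2}[\U_{n}]\bigr\} \geq 1-\alpha
\]
for exchangeable $\U_{n}$, and then establishes an exact order-statistic equivalence linking the off-sample event $\{\qnt_{\alpha_{\textup{lo}}}[\U_{n}] \leq U \leq \qnt_{\alpha_{\textup{hi}}}[\U_{n}]\}$ to the on-sample event $\{\qnt_{\alpha/2}[\U_{n}\cup\{U\}] \leq U \leq \qnt_{1-\alpha/2}[\U_{n}\cup\{U\}]\}$ on the augmented $(m{+}1)$-sample; the deflated level $\alpha_{\textup{lo}}$ drops out of solving the index equation $n-\lceil n\beta\rceil+1 = n+2-\lceil(n+1)\alpha/2\rceil$ directly, rather than through a negation trick. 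For calibration, the paper does an exact rank-counting argument on the augmented sample, while you pick up $1/(m{+}1)$ separately from the second half of Lemma~\ref{lem:quantile_offsample} on each tail. Your route is more modular (everything is a corollary of one lemma), whereas the paper's equivalence is slightly more informative and sidesteps the ceiling/floor mismatch you flag in the integer case.

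One slip to fix: the ``companion bound'' you state for the lower tail, $\prr\{\loss(\algo_{\TR};Z) < \qnt_{\alpha_{\textup{lo}}}[\losses_{\CP}]\} \leq \alpha/2 + 1/(m{+}1)$, points the wrong way for calibration. To upper-bound the coverage via $\prr\{\text{in}\} = 1 - \prr\{L>\qnt_{\alpha_{\textup{hi}}}\} - \prr\{L<\qnt_{\alpha_{\textup{lo}}}\}$ you need \emph{lower} bounds on both tail probabilities; the second part of Lemma~\ref{lem:quantile_offsample} applied to the negated losses gives $\prr\{L < \qnt_{\alpha_{\textup{lo}}}\} \geq \alpha/2 - 1/(m{+}1)$ (in the non-integer case, with the integer case handled by your noted slack), which is what you actually need and which yields the stated $1-\alpha+2/(m{+}1)$.
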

\begin{rmk}[CPP versus traditional CP]
This simple CPP set construction is very closely related to traditional (split) conformal prediction. In typical conformal prediction, one assigns ``scores'' to the predictions made by the base predictor and constructs valid prediction intervals for an as-yet unobserved score, from which one then reverse-engineers a prediction interval for the as-yet unobserved label. If one replaces the base predictor with $\algo_{\TR}$ (which need not ``predict'' anything), replaces the scores with generic losses, and constructs a two-sided prediction interval instead of a one-sided interval, it amounts to precisely the CPP procedure described above. In this sense, since there is no need to reverse-engineer a prediction set for new labels, this CPP procedure can be considered even simpler than traditional conformal prediction.\hfill$\blacksquare$
\end{rmk}
\begin{rmk}[Discrete versus continuous losses]
Note that the validity guarantee of Theorem \ref{thm:cpp_candidate} holds for all manners of loss functions, including cases where $\loss(h;Z)$ is a discrete random variable for all $h \in \HH$ (e.g., where the loss of interest is the zero-one error). As the term ``loss'' implies, we are assuming that $\loss(h;z)$ takes numerical values, understood as a penalty incurred by $h$ at $z$, and that all else equal, smaller losses are better. As such, even when $\loss(h;Z)$ is a discrete random variable, there is value in constructing CPP sets. Instead of answering questions like
\begin{center}
\textit{``Given an input, how confident are we in what the true label will be?''}
\end{center}
which are the domain of traditional conformal prediction, we can answer questions like
\begin{center}
\textit{``Given a trained classifier, how confident are we that it will\\correctly classify a new instance?''}
\end{center}
making particular use of the \emph{boundary} of the CPP interval $\widehat{\EE}_{\alpha}$. On the other hand, when the loss is continuous (e.g., squared error, logistic loss, etc.), the calibration guarantee of Theorem \ref{thm:cpp_candidate} also can be brought to bear on the problem, meaning that both the boundary and the \emph{width} of the CPP interval $\widehat{\EE}_{\alpha}$ will be of interest, particularly when comparing different algorithms.\hfill$\blacksquare$
\end{rmk}

\begin{algorithm}[t!]
\caption{Candidate CPP.}
\label{algo:cpp_candidate}
\begin{algorithmic}
\State \textbf{inputs:} algorithm $\algo$, data $\Z_{n}$, level $\alpha$.
\medskip
\State Split the data index $\displaystyle [n] = \II_{\TR} \cup \II_{\CP}$, typically with $\displaystyle |\II_{\TR}| = |\II_{\CP}|=\lfloor n/2 \rfloor$.
\medskip
\State Run algorithm on training data to obtain a candidate $\displaystyle \algo_{\TR} = \algo(\Z_{\TR})$.
\medskip
\State Evaluate this candidate and compute losses $\displaystyle \losses_{\CP} = \left\{ \loss(\algo_{\TR};Z_{i}): i \in \II_{\CP} \right\}$.
\medskip
\State Compute inflated and deflated quantiles $\alpha_{\text{hi}}$ and $\alpha_{\text{lo}}$ as in (\ref{eqn:cpp_candidate_alphas}).
\medskip
\State \textbf{return:} $\displaystyle \widehat{\EE}_{\alpha}(\algo_{\TR}) = \left\{ u \in \RR: \qnt_{\alpha_{\text{lo}}}\left[\losses_{\CP}\right] \leq u \leq \qnt_{\alpha_{\text{hi}}}\left[\losses_{\CP}\right] \right\}$.
\end{algorithmic}
\end{algorithm}

\paragraph{Aside: beware of naive swapping of losses}

One may be tempted to naively just ``swap'' the prediction-oriented non-conformity scores for generic losses in the full conformal prediction procedure. For example, a simple modification to the proof of Theorem \ref{thm:cp_regression_full} allows one to derive results such as
\begin{align}
1-\alpha \leq \prr\left\{ \loss(\algo_{n}(Z);Z) \leq \qnt_{1-\alpha}\left[ \losses_{n}(Z) \cup \{\infty\}\right] \right\} \leq 1-\alpha + \frac{1}{n+1},
\end{align}
where $\algo_{n}(z) \defeq \algo(\Z_{n} \cup \{z\})$ and $\losses_{n}(z) \defeq \{\loss(\algo_{n}(z);Z_{i}) : i \in [n]\}$ for all $z \in \ZZ$. However, a moment's thought shows that this does not achieve what we desire. Leaving the computational intractability aside, this is conceptually misguided since as we need to have $Z$ in order to compute the bound $\qnt_{1-\alpha}\left[ \losses_{n}(Z) \cup \{\infty\}\right]$, we can just compute $\loss(\algo_{n}(Z);Z)$ directly, and thus spending resources on prediction for this quantity becomes meaningless.

\subsection{CPP for algorithms}\label{sec:cpp_algo}

Next, we consider modifications to the CPP approach described in the previous section, in order to gain new insights into the uncertainty that arises due to the \emph{interaction} of the algorithm $\algo$ and the data it uses for training. The most direct entry point to this problem is to simply split up the data in order to obtain multiple training sets. This introduces a higher cost in terms of data needed for training, but allows the user to gain new insights into the uncertainty of the learning process, while enjoying formal guarantees of validity and calibration. Details are provided in the following three sub-sections, with the core procedures summarized in Algorithms \ref{algo:cpp_algo_zfree} and \ref{algo:cpp_algo_zmod}.

\subsubsection{$Z$-free CPP sets}\label{sec:cpp_algo_zfree}

Assuming our full data set is $\Z_{n} = (Z_{1},\ldots,Z_{n})$, we break the data into subsets for training ($\TR$) and individual points for evaluation ($\EV$) as follows: split $[n] = \II_{\TR} \cup \II_{\EV}$, using sub-indices
\begin{align}\label{eqn:cpp_algo_zfree_partition}
\II_{\TR} = \bigcup_{j \in \II_{\EV}} \II_{\TR}^{(j)}.
\end{align}
This allows us to construct tuples of the form $(Z_{j},\Z_{\TR}^{(j)})$, where we denote the training subsets by $\Z_{\TR}^{(j)} \defeq \{Z_{i}: i \in \II_{\TR}^{(j)}\}$, for each $j \in \II_{\EV}$. These are then passed through $\algo$ and $\loss$ to obtain the set of losses
\begin{align*}
\losses_{\EV} \defeq \left\{ \loss(\algo_{\TR}^{(j)};Z_{j}): j \in \II_{\EV} \right\},
\end{align*}
where we denote $\algo_{\TR}^{(j)} \defeq \algo(\Z_{\TR}^{(j)})$. Using these losses, we set quantile levels to
\begin{align}\label{eqn:cpp_algo_zfree_alphas}
\alpha_{\textup{lo}} \defeq \frac{\alpha}{2}-\frac{1-\alpha/2}{|\II_{\EV}|}, \quad \alpha_{\textup{hi}} \defeq \left(1+\frac{1}{|\II_{\EV}|}\right)\left(1-\frac{\alpha}{2}\right)
\end{align}
and construct a CPP set as
\begin{align}\label{eqn:cpp_algo_zfree_predset}
\widehat{\EE}_{\alpha}(\algo) \defeq \left\{u \in \RR: \qnt_{\alpha_{\textup{lo}}}[\losses_{\EV}] \leq u \leq \qnt_{\alpha_{\textup{hi}}}[\losses_{\EV}] \right\}.
\end{align}
Clearly, this is reminiscent of the construction (\ref{eqn:cpp_candidate_predset}), with the key difference being that for each point in the evaluation set $\losses_{\EV}$, the algorithm $\algo$ is being run on a different training set. Since the prediction interval does not depend on the new point $Z$, we refer to it as being \term{$Z$-free}. In order to ensure that these losses are exchangeable, we require that the algorithm $\algo$ not pay attention to the order of the data points it is passed, a property often called \term{symmetry} \citep{bousquet2002a}. That is, for any sample size $m > 0$ and permutation $\pi$, we require that
\begin{align}\label{eqn:algo_symmetry}
\algo(Z_{1},\ldots,Z_{m}) \text{ and } \algo(Z_{\pi(1)},\ldots,Z_{\pi(m)}) \text{ have the same distribution over } \HH.
\end{align}
Even if $\algo$ operates sequentially on its inputs, as long as it randomly shuffles these inputs before going to work, the symmetry property (\ref{eqn:algo_symmetry}) holds. With this established, we can readily use the arguments made in the proof of Theorem \ref{thm:cpp_candidate} to derive an analogous guarantee here.
\begin{thm}[Algorithm CPP: $Z$-free case]\label{thm:cpp_algo_zfree}
Given the CPP set $\widehat{\EE}_{\alpha}$ as defined in (\ref{eqn:cpp_algo_zfree_predset}), let the partition of $\II_{\TR}$ in (\ref{eqn:cpp_algo_zfree_partition}) be disjoint, with all subsets $\II_{\TR}^{(j)}$ having equal cardinality,\footnote{Here and in what follows, for simplicity we assume that when the number of samples does not divide evenly, the randomly selected remaining points are discarded.} say $m = |\II_{\TR}^{(j)}|$ for all $j \in \II_{\EV}$. Let $\Z$ denote a fresh sample of size $m$. When the random variables $Z_{1},\ldots,Z_{n}$, $Z$, and $\Z$ are all exchangeable, and the algorithm $\algo$ is symmetric in the sense of (\ref{eqn:algo_symmetry}), we have
\begin{align*}
\prr\left\{ \loss(\algo(\Z);Z) \in \widehat{\EE}_{\alpha}(\algo) \right\} \geq 1-\alpha.
\end{align*}
Furthermore, when the losses $\loss(\algo_{\TR}^{(j)};Z_{j})$ (for $j \in \II_{\EV}$) and $\loss(\algo(\Z);Z)$ are all almost surely distinct, we also have that
\begin{align*}
\prr\left\{ \loss(\algo(\Z);Z) \in \widehat{\EE}_{\alpha}(\algo) \right\} \leq 1-\alpha + \frac{2}{|\II_{\EV}|+1}.
\end{align*}
\end{thm}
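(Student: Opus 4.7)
The proof plan is to mimic the structure of the proof of Theorem \ref{thm:cpp_candidate}: reduce to Lemma \ref{lem:quantile_offsample} applied at both ends of the interval, then combine via a union bound. The only genuinely new ingredient is verifying exchangeability of the losses in $\losses_{\EV}$ together with the as-yet-unobserved loss $\loss(\algo(\Z);Z)$, since unlike in the candidate-CPP setting, each such loss now depends on a \emph{different} training set. This will be the main obstacle, and it is precisely where the symmetry assumption (\ref{eqn:algo_symmetry}) must be invoked.

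For the exchangeability step, I would proceed as follows. By hypothesis the $n+m+1$ random variables comprising $Z_{1},\ldots,Z_{n}$, $Z$, and the $m$ coordinates of $\Z$ are exchangeable, and the partition in (\ref{eqn:cpp_algo_zfree_partition}) is chosen independently of the data. Grouping these variables into $|\II_{\EV}|+1$ disjoint packets of size $m+1$---namely $W_{j} \defeq (Z_{j}, \Z_{\TR}^{(j)})$ for $j \in \II_{\EV}$ together with $W_{\ast} \defeq (Z, \Z)$---yields an exchangeable collection of $\ZZ^{m+1}$-valued random vectors, by the standard fact that grouping exchangeable scalars into equally sized blocks via a data-independent scheme preserves exchangeability at the block level (note the counting works: $(|\II_{\EV}|+1)(m+1) = n+m+1$). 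Define the map $\phi(z,\mathbf{z}) \defeq \loss(\algo(\mathbf{z}); z)$. By the symmetry property (\ref{eqn:algo_symmetry}), $\phi$ is invariant in distribution under permutations of the training coordinates of its packet argument, so applying $\phi$ packet-wise produces real-valued random variables---namely the elements of $\losses_{\EV}$ together with $\loss(\algo(\Z);Z)$---that remain exchangeable, provided any internal randomness used by $\algo$ is refreshed independently across packets.

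With exchangeability of the losses in hand, the remainder of the proof is a direct replay of the quantile-inflation calculation from Theorem \ref{thm:cpp_candidate}. Setting $k = |\II_{\EV}|$ and $U \defeq \loss(\algo(\Z);Z)$, Lemma \ref{lem:quantile_offsample} at level $1-\alpha/2$ gives the upper-side control $\prr\{U \leq \qnt_{\alpha_{\textup{hi}}}[\losses_{\EV}]\} \geq 1-\alpha/2$; the corresponding lower-side control $\prr\{U < \qnt_{\alpha_{\textup{lo}}}[\losses_{\EV}]\} \leq \alpha/2$ follows by the analogous right-quantile argument underlying (\ref{eqn:quantile_sum_upperbd}), with the adjusted level $\alpha_{\textup{lo}}$ of (\ref{eqn:cpp_algo_zfree_alphas}) chosen to absorb the off-sample correction. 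A union bound then yields the $1-\alpha$ validity claim. Under the almost-sure-distinctness hypothesis, invoking the matching upper halves of Lemma \ref{lem:quantile_offsample} at each end contributes two $1/(k+1)$ slacks, which combine to the claimed $2/(|\II_{\EV}|+1)$ calibration slack.
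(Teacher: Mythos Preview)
Your proposal is correct and follows essentially the same approach as the paper: establish exchangeability of the $|\II_{\EV}|+1$ losses, then invoke the two-sided quantile argument already developed for Theorem~\ref{thm:cpp_candidate}. Your packet construction $W_{j}=(Z_{j},\Z_{\TR}^{(j)})$ and $W_{\ast}=(Z,\Z)$ makes explicit what the paper compresses into one sentence (``exchangeability of the data coupled with symmetry of $\algo$ immediately implies that the losses \ldots\ are all exchangeable''), and your counting check $(|\II_{\EV}|+1)(m+1)=n+m+1$ is the right sanity condition; otherwise the two proofs coincide.
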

\begin{rmk}
We reinforce the point that the random event considered in Theorem \ref{thm:cpp_algo_zfree} considers the random draw of an as-yet unobserved data sample $\Z$, in addition to the new point $Z$. This is the critical difference between Theorems \ref{thm:cpp_algo_zfree} and \ref{thm:cpp_candidate}; the new sample $\Z$ plays no role in Theorem \ref{thm:cpp_candidate}, and the uncertainty is only due to the random draw of $Z$. On the other hand, Theorem \ref{thm:cpp_algo_zfree} lets us use a computable quantity $\widehat{\EE}_{\alpha}$, depending only on $\Z_{n}$, to make predictive statements about the future performance of the algorithm $\algo$ when given another data sample for training. For example, this could be used to make predictions about the performance of competing algorithms on an independent trial covering the full train-test cycle.\hfill$\blacksquare$
\end{rmk}

\begin{algorithm}[t!]
\caption{Algorithm CPP ($Z$-free).}
\label{algo:cpp_algo_zfree}
\begin{algorithmic}
\State \textbf{inputs:} algorithm $\algo$, data $\Z_{n}$, level $\alpha$, set size $k$.
\medskip
\State Split the data index $\displaystyle [n] = \II_{\TR} \cup \II_{\EV}$, with $|\II_{\EV}|=k$.
\medskip
\State Further partition $\II_{\TR}$ as in (\ref{eqn:cpp_algo_zfree_partition}), yielding $\displaystyle |\II_{\TR}^{(j)}| = \lfloor (n-k)/k \rfloor$ for each training subset.
\medskip
\State Run algorithm on training data to obtain $k$ candidates: $\displaystyle \left\{ \algo_{\TR}^{(j)} = \algo(\Z_{\TR}^{(j)}): j \in \II_{\EV} \right\}$.
\medskip
\State Evaluate these candidates and compute losses $\displaystyle \losses_{\EV} = \left\{ \loss(\algo_{\TR}^{(j)};Z_{j}): j \in \II_{\EV} \right\}$.
\medskip
\State Compute inflated and deflated quantiles $\alpha_{\text{hi}}$ and $\alpha_{\text{lo}}$ as in (\ref{eqn:cpp_algo_zfree_alphas}).
\medskip
\State \textbf{return:} $\displaystyle \widehat{\EE}_{\alpha}(\algo) = \left\{ u \in \RR: \qnt_{\alpha_{\text{lo}}}\left[\losses_{\EV}\right] \leq u \leq \qnt_{\alpha_{\text{hi}}}\left[\losses_{\EV}\right] \right\}$.
\end{algorithmic}
\end{algorithm}

\begin{algorithm}[t!]
\caption{Algorithm CPP ($Z$-modulated).}
\label{algo:cpp_algo_zmod}
\begin{algorithmic}
\State \textbf{inputs:} algorithm $\algo$, data $\Z_{n}$, level $\alpha$, choice of sub-routine $\texttt{sub}$, set size $k$.
\medskip
\State Split the data index $[n] = \II_{\TR} \cup \II_{\CP}$, with $|\II_{\CP}| = |\II_{\CP}| = \lfloor n/2 \rfloor$.
\medskip
\State Further partition $\II_{\TR}$ and $\II_{\CP}$ as in (\ref{eqn:cpp_algo_zmod_fixed_partition}), with $|\II_{\TR}^{\prime}|=|\II_{\CP}^{\prime}|=k$.
\medskip
\State Run algorithm, yielding $2k$ candidates: $\displaystyle \left\{ \algo_{\TR}^{(j)}: j \in \II_{\TR}^{\prime} \right\}$ and $\displaystyle \left\{ \algo_{\CP}^{(j)}: j \in \II_{\CP}^{\prime} \right\}$.
\medskip
\State Construct specialized $(\text{input},\text{response})$ pairs $\displaystyle \mv{D}_{\TR}^{\prime} = \left\{ (Z_{j}, \loss(\algo_{\TR}^{(j)};Z_{j})): j \in \II_{\TR}^{\prime} \right\}$.
\medskip
\State Compute inflated quantile level $\alpha_{\CP} = \min\{1, (1-\alpha)(1+1/k) \}$.
\medskip
\If{$\texttt{sub}=\texttt{Regression}$}
\medskip
\State Get location predictor $\displaystyle \widehat{f}_{\TR} = \texttt{Regression}\left[\mv{D}_{\TR}^{\prime}\right]$.
\medskip
\State Score this predictor with $\S_{\CP}^{\prime}$ as in (\ref{eqn:cpp_algo_zmod_fixed_scores}).
\medskip
\State \textbf{return:} $\widehat{\EE}_{\alpha}(z;\algo) = \widehat{f}_{\TR}(z) \pm \qnt_{\alpha_{\CP}}\left[\S_{\CP}^{\prime}\right]$
\EndIf
\medskip
\If{$\texttt{sub}=\texttt{QuantReg}$}
\medskip
\State Get low/high quantile predictors $\displaystyle (\widehat{q}_{\text{lo}},\widehat{q}_{\text{hi}}) = \texttt{QuantReg}\left[\mv{D}_{\TR}^{\prime}\right]$.
\medskip
\State Score these predictors with $\S_{\CP}^{\prime\prime}$ as in (\ref{eqn:cpp_algo_zmod_variable_scores}).
\medskip
\State \textbf{return:} $\widehat{\EE}_{\alpha}(z;\algo) = \left[ \widehat{q}_{\text{lo}}(z) - \qnt_{\alpha_{\CP}}\left[\S_{\CP}^{\prime\prime}\right], \widehat{q}_{\text{hi}}(z) + \qnt_{\alpha_{\CP}}\left[\S_{\CP}^{\prime\prime}\right] \right]$
\EndIf
\end{algorithmic}
\end{algorithm}

\subsubsection{$Z$-modulated CPP sets}\label{sec:cpp_algo_zmod_fixed}

Next, we consider the task of making predictive statements about the performance of a learning algorithm being fed an as-yet unobserved training sample, where the evaluation is conditional on a particular data point that we have access to. To realize such a procedure, we apply a conformal regression ``wrapper'' to the data-generating process of $Z \mapsto \loss(\algo(\Z);Z)$. In analogy with traditional regression problems, $Z$ is the ``input,'' $\loss(\algo(\Z);Z)$ is the ``response,'' and the ``noise'' arises due to the interaction of $\algo$ with the random sample $\Z$.

Let us spell this out a bit more explicitly. Given data $\Z_{n}$ to start with, much like the case of candidate-centric CPP, we split the index into $[n] = \II_{\TR} \cup \II_{\CP}$, a set for training, and a set for conformal prediction, assumed to be disjoint. Since we need to establish a correspondence between evaluation points and the losses incurred by algorithm $\algo$, we further break down these indices as
\begin{align}\label{eqn:cpp_algo_zmod_fixed_partition}
\II_{\TR} = \II_{\TR}^{\prime} \cup \left( \bigcup_{j \in \II_{\TR}^{\prime}} \II_{\TR}^{(j)} \right),%
\quad%
\II_{\CP} = \II_{\CP}^{\prime} \cup \left( \bigcup_{j \in \II_{\CP}^{\prime}} \II_{\CP}^{(j)} \right).
\end{align}
The training data is used to pin down this correspondence using any regression sub-routine,\footnote{Traditionally, the term ``regression'' is often taken to mean ``regression by least squares,'' and thus regardless of model assumptions such as linearity, the traditional setting amounts to an approximation of the conditional expectation mapping $z \mapsto \exx [\loss(\algo(\Z);Z) \cond Z=z]$. Classical alternatives include least absolute deviations, which approximates the conditional median $z \mapsto \qnt_{1/2}[\loss(\algo(\Z);Z) \cond Z=z]$, and more general robust regression techniques. Here we take a broad view of regression methods, allowing any procedure which approximates the central tendency of $\loss(\algo(\Z);Z)$, conditioned on the event $\{Z=z\}$.} denoted here by $\texttt{Regression}[\cdot]$. We pass $(\text{input},\text{response})$ pairs to this sub-routine as
\begin{align*}
\widehat{f}_{\TR} = \texttt{Regression}\left[ \left\{ (Z_{j},\loss(\algo_{\TR}^{(j)};Z_{j})): j \in \II_{\TR}^{\prime} \right\} \right],
\end{align*}
where once again we denote $\algo_{\TR}^{(j)} \defeq \algo(\Z_{\TR}^{(j)})$ and $\Z_{\TR}^{(j)} \defeq \{Z_{i} \in \II_{\TR}^{(j)}\}$. The remaining data points are used to apply a conformal regression wrapper to this predictor $\widehat{f}_{\TR}$, using absolute residuals as non-conformity scores. That is, we define
\begin{align}\label{eqn:cpp_algo_zmod_fixed_scores}
\S_{\CP}^{\prime} \defeq \left\{ |\widehat{f}_{\TR}(Z_{j})-\loss(\algo_{\CP}^{(j)};Z_{j})|: j \in \II_{\CP}^{\prime} \right\},
\end{align}
where $\algo_{\CP}^{(j)}$ is defined analogously to $\algo_{\TR}^{(j)}$, and then proceed to construct CPP intervals as
\begin{align}\label{eqn:cpp_algo_zmod_fixed_predset}
\widehat{\EE}_{\alpha}(z;\algo) \defeq \left\{ u \in \RR: \widehat{f}_{\TR}(z)-\qnt_{\alpha_{\CP}}\left[ \S_{\CP}^{\prime} \right] \leq u \leq \widehat{f}_{\TR}(z)+\qnt_{\alpha_{\CP}}\left[ \S_{\CP}^{\prime} \right] \right\},
\end{align}
where $\alpha_{\CP} \defeq (1+1/|\II_{\CP}^{\prime}|)(1-\alpha)$. Since this CPP interval can be constructed using a new $Z$ to make predictions about the performance of $\algo$ on a fresh sample, we refer to it as being \term{$Z$-modulated}. As long as we have a symmetric algorithm, we can readily apply standard results for split conformal regression (e.g., Theorem \ref{thm:cp_regression_split}) to our setting, allowing us to obtain validity and calibration guarantees as follows.
\begin{thm}[Algorithm CPP: $Z$-modulated, fixed-width case]\label{thm:cpp_algo_zmod_fixed}
Given the CPP set $\widehat{\EE}_{\alpha}$ as defined in (\ref{eqn:cpp_algo_zmod_fixed_predset}), let the partition of $\II_{\TR}$ and $\II_{\CP}$ in (\ref{eqn:cpp_algo_zmod_fixed_partition}) be disjoint, with all subsets $\II_{\TR}^{(j)}$ and $\II_{\CP}^{(j)}$ having equal cardinality, say $m = |\II_{\TR}^{(j)}| = |\II_{\CP}^{(j)}|$ for all $j$. Let $\Z$ denote a fresh sample of size $m$. When the random variables $Z_{1},\ldots,Z_{n}$, $Z$, and $\Z$ are all exchangeable, and the algorithm $\algo$ is symmetric in the sense of (\ref{eqn:algo_symmetry}), it follows that
\begin{align*}
\prr\left\{ \loss(\algo(\Z);Z) \in \widehat{\EE}_{\alpha}(Z;\algo) \right\} \geq 1-\alpha.
\end{align*}
Furthermore, when the losses $\loss(\algo_{\CP}^{(j)};Z_{j})$ (for $j \in \II_{\CP}^{\prime}$) and $\loss(\algo(\Z);Z)$ are all almost surely distinct, we also have
\begin{align*}
\prr\left\{ \loss(\algo(\Z);Z) \in \widehat{\EE}_{\alpha}(Z;\algo) \right\} \leq 1-\alpha + \frac{1}{|\II_{\CP}^{\prime}|+1}.
\end{align*}
\end{thm}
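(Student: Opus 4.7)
The plan is to reduce the theorem directly to the split conformal regression guarantee (Theorem \ref{thm:cp_regression_split}) by reinterpreting the map $z \mapsto \loss(\algo(\Z);z)$ as a noisy regression problem, exactly as motivated in the paragraph preceding the theorem. First I would rename things: for each $j \in \II_{\CP}^{\prime}$, set the ``input'' to $Z_{j}$ and the ``response'' to $Y_{j} \defeq \loss(\algo_{\CP}^{(j)};Z_{j})$; for the test point set the input to $Z$ and the response to $Y \defeq \loss(\algo(\Z);Z)$. Since $\II_{\TR}$ and $\II_{\CP}$ are disjoint, the fitted $\widehat{f}_{\TR}$ is a deterministic function of data indexed entirely by $\II_{\TR}$, so it is independent of the $(Z_{j},\Z_{\CP}^{(j)})$ over $j \in \II_{\CP}^{\prime}$ and of $(Z,\Z)$. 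The score set $\S_{\CP}^{\prime}$ in (\ref{eqn:cpp_algo_zmod_fixed_scores}) then takes exactly the split conformal form $|\widehat{f}_{\TR}(Z_{j}) - Y_{j}|$, with test score $S \defeq |\widehat{f}_{\TR}(Z) - Y|$.

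Second, I would establish the key exchangeability claim: conditional on $\widehat{f}_{\TR}$, the $|\II_{\CP}^{\prime}|+1$ pairs $\{(Z_{j},Y_{j}): j \in \II_{\CP}^{\prime}\} \cup \{(Z,Y)\}$ are exchangeable. At the level of underlying raw data, the tuples $\{(Z_{j},\Z_{\CP}^{(j)}): j \in \II_{\CP}^{\prime}\} \cup \{(Z,\Z)\}$ are exchangeable because $Z_{1},\ldots,Z_{n},Z,\Z$ are jointly exchangeable, the subsets $\II_{\CP}^{(j)}$ are disjoint with common cardinality $m$, and $|\Z|=m$ as well, so the test tuple is distributionally indistinguishable from any calibration tuple. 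The symmetry property (\ref{eqn:algo_symmetry}) then ensures that the map $(Z_{j},\Z_{\CP}^{(j)}) \mapsto (Z_{j},\loss(\algo(\Z_{\CP}^{(j)});Z_{j}))$ is invariant under reordering within each training subsample, so applying this deterministic map tuple-wise preserves exchangeability of the resulting $(Z_{j},Y_{j})$ and $(Z,Y)$ pairs. Since $\widehat{f}_{\TR}$ is a fixed function of disjoint data, exchangeability transfers to the scores $\{S_{j}: j \in \II_{\CP}^{\prime}\} \cup \{S\}$.

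Third, with exchangeability of scores in hand I would invoke Lemma \ref{lem:quantile_offsample} applied to $\S_{\CP}^{\prime} \cup \{S\}$: with $\alpha_{\CP} = (1+1/|\II_{\CP}^{\prime}|)(1-\alpha)$, the lower bound gives
\begin{align*}
\prr\left\{ |\widehat{f}_{\TR}(Z) - \loss(\algo(\Z);Z)| \leq \qnt_{\alpha_{\CP}}\left[\S_{\CP}^{\prime}\right] \right\} \geq 1-\alpha,
\end{align*}
and the event inside is exactly $\{\loss(\algo(\Z);Z) \in \widehat{\EE}_{\alpha}(Z;\algo)\}$ by the definition (\ref{eqn:cpp_algo_zmod_fixed_predset}). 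The calibration upper bound of $1-\alpha + 1/(|\II_{\CP}^{\prime}|+1)$ follows from the second half of Lemma \ref{lem:quantile_offsample}; here the almost-surely-distinct losses hypothesis ensures the absolute-residual scores are themselves almost surely distinct (or, strictly, an additional randomized tie-breaking could be inserted, as noted in the discussion after Theorem \ref{thm:cp_regression_full}).

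The main obstacle is the exchangeability step, because each calibration ``pair'' is really shorthand for an evaluation point bundled with an entire disjoint training subsample, and one has to verify that the equal-cardinality partition together with algorithmic symmetry really does put the test tuple $(Z,\Z)$ on equal footing with each calibration tuple $(Z_{j},\Z_{\CP}^{(j)})$. Once this reduction is clean, the remainder is a direct application of split conformal regression with $\loss(\algo(\cdot);\cdot)$ playing the role of the unknown response map.
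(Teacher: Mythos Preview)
Your proposal is correct and follows essentially the same route as the paper: both reduce to the split conformal regression guarantee (Theorem~\ref{thm:cp_regression_split}, equivalently Lemma~\ref{lem:quantile_offsample}) via the correspondence $X \leftrightarrow Z$, $Y \leftrightarrow \loss(\algo(\Z);Z)$, with exchangeability of the calibration pairs and test pair established through the disjoint equal-size partitioning plus algorithmic symmetry, exactly as you outline. If anything, your treatment of the exchangeability step is more explicit than the paper's (which simply points back to the proof of Theorem~\ref{thm:cpp_algo_zfree}), and your caveat about tie-breaking for the calibration bound is a fair hedge, since distinct losses need not automatically yield distinct absolute residuals.
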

\begin{rmk}
Note that the preceding theorem lets us make predictive statements which are qualitatively very different from those which are possible using the ``$Z$-free'' result in Theorem \ref{thm:cpp_algo_zfree}. Having constructed the CPP set $\widehat{\EE}_{\alpha}$ using the initial data $\Z_{n}$, the idea is that we can now evaluate $\widehat{\EE}_{\alpha}(z)$ at specific new points which are of domain-specific interest, making predictions about how we expect the algorithm $\algo$ to perform at these points, when passed an as-yet unobserved new training sample.\hfill$\blacksquare$
\end{rmk}

\subsubsection{Variable-width CPP sets}\label{sec:cpp_algo_zmod_variable}

Finally, we note that the CPP sets $\widehat{\EE}_{\alpha}(z)$ constructed in (\ref{eqn:cpp_algo_zmod_fixed_predset}) have a \emph{fixed} width. Passing different choices of $z$ certainly leads to different predictive intervals, but all of these intervals have the exact same width, equal to $2\qnt_{\alpha_{\CP}}[\S_{\CP}^{\prime}]$. As a natural means of constructing variable-width CPP sets, we adapt the conformalized quantile regression techniques of \citet{romano2020a} to our problem setting. The overall procedure is almost identical to our fixed-width CPP set construction, with two key differences. The first difference is that instead of using the single predictor $\widehat{f}_{\TR}(z)$, which is intended to approximate the central tendency of $\loss(\algo(\Z);Z)$ conditioned on $\{Z=z\}$, we use a pair of predictors $\widehat{q}_{\textup{lo}}(z)$ and $\widehat{q}_{\textup{hi}}(z)$, respectively intended to approximate the conditional quantile $\qnt_{\alpha}[\loss(\algo(\Z);Z) \cond Z=z]$ at low and high levels. The second difference is that since we now have two ancillary predictors instead of one, we adjust the non-conformity scores to reflect the worst of the two predictors.

Spelling this all out more explicitly, given the sample $\Z_{n}$, we break up the index $[n] = \II_{\TR} \cup \II_{\CP}$ into sub-indices precisely as was done in (\ref{eqn:cpp_algo_zmod_fixed_partition}). We then pass the same $(\text{input},\text{response})$ pairs as before to a sub-routine for doing \emph{quantile} regression,\footnote{There are numerous methods available for general-purpose quantile regression, from classical linear quantile regression \citep{koenker1978a}, to modern techniques using kernel methods \citep{takeuchi2006a}, random forests \citep{meinshausen2006a}, and neural networks \citep{taylor2000a}.} denoted by $\texttt{QuantReg}[\cdot]$, to obtain
\begin{align*}
(\widehat{q}_{\textup{lo}}, \widehat{q}_{\textup{hi}}) = \texttt{QuantReg}\left[ \left\{ (Z_{j},\loss(\algo_{\TR}^{(j)};Z_{j})): j \in \II_{\TR}^{\prime} \right\} \right].
\end{align*}
The simplest approach is to fix some small $0 < \alpha < 1$, and take $\widehat{q}_{\textup{lo}}$ and $\widehat{q}_{\textup{hi}}$ respectively as the outputs of $\texttt{QuantReg}$ for levels $\alpha/2$ and $1-\alpha/2$. With these predictors in hand, we can compute non-conformity scores as
\begin{align}\label{eqn:cpp_algo_zmod_variable_scores}
\S_{\CP}^{\prime\prime} \defeq \left\{ \max\left\{\loss(\algo_{\CP}^{(j)};Z_{j})-\widehat{q}_{\textup{hi}}(Z_{j}), \widehat{q}_{\textup{lo}}(Z_{j})-\loss(\algo_{\CP}^{(j)};Z_{j})\right\}: j \in \II_{\CP}^{\prime} \right\},
\end{align}
and then construct variable-width prediction intervals as
\begin{align}\label{eqn:cpp_algo_zmod_variable_predset}
\widehat{\EE}_{\alpha}(z;\algo) \defeq \left\{ u \in \RR: \widehat{q}_{\textup{lo}}(z)-\qnt_{\alpha_{\CP}}\left[\S_{\CP}^{\prime\prime}\right] \leq u \leq \widehat{q}_{\textup{hi}}(z)+\qnt_{\alpha_{\CP}}\left[\S_{\CP}^{\prime\prime}\right] \right\}
\end{align}
where once again $\alpha_{\CP} \defeq (1+1/|\II_{\CP}^{\prime}|)(1-\alpha)$. Since different predictors are being used for the two ends of the predictive interval, it is clear that the CPP set given by (\ref{eqn:cpp_algo_zmod_variable_predset}) has the flexibility to realize variable-width prediction intervals. The same basic quantile lemmas as we have used previously can be used to establish formal guarantees for this procedure as well.
\begin{thm}[Algorithm CPP: $Z$-modulated, variable-width case]\label{thm:cpp_algo_zmod_variable}
Given the CPP set $\widehat{\EE}_{\alpha}$ as defined in (\ref{eqn:cpp_algo_zmod_variable_predset}), let the partition of $\II_{\TR}$ and $\II_{\CP}$ be disjoint, with all subsets $\II_{\TR}^{(j)}$ and $\II_{\CP}^{(j)}$ having equal cardinality, say $m = |\II_{\TR}^{(j)}| = |\II_{\CP}^{(j)}|$ for all $j$. Let $\Z$ denote a fresh sample of size $m$. When the random variables $Z_{1},\ldots,Z_{n}$, $Z$, and $\Z$ are all exchangeable, and the algorithm $\algo$ is symmetric in the sense of (\ref{eqn:algo_symmetry}), we have
\begin{align*}
\prr\left\{ \loss(\algo(\Z);Z) \in \widehat{\EE}_{\alpha}(Z;\algo) \right\} \geq 1-\alpha.
\end{align*}
Furthermore, when the losses $\loss(\algo_{\CP}^{(j)};Z_{j})$ (for $j \in \II_{\CP}^{\prime}$) and $\loss(\algo(\Z);Z)$ are all almost surely distinct, we also have that
\begin{align*}
\prr\left\{ \loss(\algo(\Z);Z) \in \widehat{\EE}_{\alpha}(Z;\algo) \right\} \leq 1-\alpha + \frac{1}{|\II_{\CP}^{\prime}|+1}.
\end{align*}
\end{thm}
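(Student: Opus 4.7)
The plan is to reduce Theorem \ref{thm:cpp_algo_zmod_variable} to an application of the off-sample quantile lemma (Lemma \ref{lem:quantile_offsample}) by treating the elements of $\S_{\CP}^{\prime\prime}$ together with a suitable ``new'' score as an exchangeable family, in direct parallel to how Theorem \ref{thm:cpp_algo_zmod_fixed} is handled.

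First I would rewrite the coverage event in terms of a single non-conformity score. Define the score function
\begin{align*}
S(z,u) \defeq \max\left\{ u - \widehat{q}_{\textup{hi}}(z),\ \widehat{q}_{\textup{lo}}(z) - u \right\},
\end{align*}
treating $\widehat{q}_{\textup{lo}}, \widehat{q}_{\textup{hi}}$ as fixed functions learned from $\Z_{\TR}$. A direct algebraic check shows that the event $\{\loss(\algo(\Z);Z) \in \widehat{\EE}_{\alpha}(Z;\algo)\}$ is exactly $\{S(Z, \loss(\algo(\Z);Z)) \leq \qnt_{\alpha_{\CP}}[\S_{\CP}^{\prime\prime}]\}$, and similarly every element of $\S_{\CP}^{\prime\prime}$ equals $S(Z_j, \loss(\algo_{\CP}^{(j)}; Z_j))$ for $j \in \II_{\CP}^{\prime}$. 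So it suffices to control the probability that the ``new'' score at $(Z, \loss(\algo(\Z);Z))$ lies below the $\alpha_{\CP}$-empirical quantile of the calibration scores.

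The main step, and the expected principal obstacle, is to verify exchangeability of the $|\II_{\CP}^{\prime}|+1$ scores
\begin{align*}
\left\{S(Z_j, \loss(\algo_{\CP}^{(j)}; Z_j)) : j \in \II_{\CP}^{\prime}\right\} \cup \{S(Z, \loss(\algo(\Z); Z))\}.
\end{align*}
I would condition on $\Z_{\TR}$ (equivalently, on the pair $(\widehat{q}_{\textup{lo}}, \widehat{q}_{\textup{hi}})$), since $\II_{\TR}$ and $\II_{\CP}$ are disjoint. Conditional on this, the relevant tuples are $(Z_j, \Z_{\CP}^{(j)})$ for $j \in \II_{\CP}^{\prime}$ and $(Z, \Z)$, all constructed from disjoint batches of exchangeable random variables of equal total size $m+1$. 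Exchangeability of the underlying data together with the symmetry property (\ref{eqn:algo_symmetry}) of $\algo$ guarantees that any permutation of these tuples yields the same joint distribution, which then propagates through the deterministic maps $\loss(\cdot;\cdot)$ and $S(\cdot,\cdot)$ (with $\widehat{q}_{\textup{lo}}, \widehat{q}_{\textup{hi}}$ fixed under the conditioning). Since the conclusion of Lemma \ref{lem:quantile_offsample} only requires exchangeability, the conditional argument suffices.

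With exchangeability in hand, applying Lemma \ref{lem:quantile_offsample} with $\alpha$ replaced by $1-\alpha$ and $n$ by $|\II_{\CP}^{\prime}|$ gives exactly
\begin{align*}
\prr\left\{ S(Z, \loss(\algo(\Z);Z)) \leq \qnt_{\alpha_{\CP}}[\S_{\CP}^{\prime\prime}] \right\} \geq 1-\alpha,
\end{align*}
which is the lower bound. For the upper bound, when the losses $\loss(\algo_{\CP}^{(j)};Z_j)$ and $\loss(\algo(\Z);Z)$ are almost surely distinct, I would note that the scores $S(\cdot,\cdot)$ inherit almost-sure distinctness (modulo a negligible tie-breaking argument, which may be handled as in the remark following Theorem \ref{thm:cp_regression_full}), so the calibration half of Lemma \ref{lem:quantile_offsample} delivers the matching upper bound $1-\alpha + \frac{1}{|\II_{\CP}^{\prime}|+1}$. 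Finally, integrating out the conditioning on $\Z_{\TR}$ preserves both inequalities, completing the proof.
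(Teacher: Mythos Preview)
Your proposal is correct and follows essentially the same route as the paper: rewrite the coverage event as a single max-type non-conformity score being below $\qnt_{\alpha_{\CP}}[\S_{\CP}^{\prime\prime}]$, argue exchangeability of the $|\II_{\CP}^{\prime}|+1$ scores via exchangeability of the underlying data and symmetry of $\algo$, and then invoke Lemma~\ref{lem:quantile_offsample}. Your explicit conditioning on $\Z_{\TR}$ and your remark about tie-breaking for distinctness of the scores are minor elaborations that the paper leaves implicit, but the structure of the argument is the same.
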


\subsection{Alternative performance definitions}

In the preceding sections, our notion of algorithm ``performance'' was constrained to a very specific, albeit typical, meaning: the value of an as-yet unobserved loss, which takes \emph{numerical} values, and is defined in a \emph{pointwise} fashion. Here we briefly discuss how the framework described above can easily be adapted to more diverse notions of performance.

\paragraph{Symbolic losses}

Consider for example the case of binary classification. While numerical losses are used for training, in the evaluation stage we are typically just concerned with performance \emph{categories}, whether a particular point was classified correctly, or if incorrectly, whether it was a false positive or negative. That is, we have categorical losses for evaluation taking values in a set of symbols, such as
\begin{align*}
\catloss(h;z) \in \{ \texttt{Correct}, \texttt{FP}, \texttt{FN} \}.
\end{align*}
Here we use $\catloss$ (instead of $\loss$) to distinguish between categorical and numerical losses. In this case, the CPP approach described in the preceding sections based on predictive intervals is meaningless, since the loss values returned by $\catloss(h;z)$ no longer have any numerical interpretation. That said, meaningful CPP sets can readily be constructed using a strategy which is analogous to the one taken in section \ref{sec:cpp_algo_zmod_variable}, albeit slightly more involved and data-intensive, using for example the conformalized quantile classification (CQC) approach of \citet{cauchois2020a}. Since the raw losses are discrete symbols, one must obtain real-valued scores somehow. Essentially, the idea is to learn appropriate scores via one additional sub-routine $\texttt{Scoring}[\cdot]$, at the cost of an extra top-level partition of the data. These learned scores are then passed through a workflow almost identical to the CPP set construction in section \ref{sec:cpp_algo_zmod_variable}. To briefly illustrate this, we split up the data now into three sets, say $[n] = \II_{\TR} \cup \II_{\EV} \cup \II_{\CP}$, and similar to section \ref{sec:cpp_algo_zmod_fixed}, we sub-partition further as
\begin{align*}
\II_{\TR} = \II_{\TR}^{\prime} \cup \left( \bigcup_{j \in \II_{\TR}^{\prime}} \II_{\TR}^{(j)} \right),%
\quad%
\II_{\EV} = \II_{\EV}^{\prime} \cup \left( \bigcup_{j \in \II_{\EV}^{\prime}} \II_{\EV}^{(j)} \right),%
\quad%
\II_{\CP} = \II_{\CP}^{\prime} \cup \left( \bigcup_{j \in \II_{\CP}^{\prime}} \II_{\CP}^{(j)} \right).
\end{align*}
Then using the sub-routine for scoring, one obtains a scoring function
\begin{align*}
\widehat{s} = \texttt{Scoring}\left[ \left\{ (Z_{j}, \catloss(\algo_{\TR}^{(j)};Z_{j})) : j \in \II_{\TR}^{\prime} \right\} \right].
\end{align*}
As a typical example of how to implement $\texttt{Scoring}[\cdot]$, one might take the $(\text{instance},\text{label})$ pairs that are passed to the sub-routine and use them to train a classifier; the values returned by $\widehat{s}(z,l)$ for any arbitrary $z \in \ZZ$ and $l \in \{\texttt{Correct}, \texttt{FP}, \texttt{FN} \}$ then might naturally be the value of a numerical loss (e.g., log-loss) incurred by this learned classifier at the point $(z,l)$. Now that we have an ``analog'' signal to work with, one then uses the second subset to obtain a conditional quantile estimator for these scores as
\begin{align*}
\widehat{q}_{\text{hi}} = \texttt{QuantReg}\left[ \left\{ \left( Z_{j},\widehat{s}(Z_{j},\catloss(\algo_{\EV}^{(j)};Z_{j})) \right): j \in \II_{\EV} \right\} \right],
\end{align*}
noting the key role played by $\widehat{s}$ here. Using the final remaining subset to construct non-conformity scores as
\begin{align*}
\S_{\CP}^{\prime\prime} = \left\{ \widehat{q}_{\text{hi}}(Z_{j})-\widehat{s}(Z_{j},\catloss(\algo_{\EV}^{(j)};Z_{j})): j \in \II_{\CP}^{\prime} \right\},
\end{align*}
one can naturally construct meaningful CPP sets $\widehat{\EE}_{\alpha}(z)$ as
\begin{align*}
\widehat{\EE}_{\alpha}(z) \defeq \left\{l \in \{\texttt{Correct}, \texttt{FP}, \texttt{FN}\}: \widehat{s}(z,l) \geq \widehat{q}_{\text{hi}}(z) - \qnt_{\alpha_{\CP}}\left[ \S_{\CP}^{\prime\prime} \right]  \right\},
\end{align*}
with $\alpha_{\CP}$ defined as in section \ref{sec:cpp_algo_zmod_variable}. Once again, a standard argument using the off-sample quantile lemmas applied to the non-conformity scores yields
\begin{align*}
1-\alpha \leq \prr\left\{ \catloss(\algo(\Z);Z) \in \widehat{\EE}_{\alpha}(Z;\algo) \right\} \leq 1-\alpha + \frac{1}{|\II_{\CP}^{\prime}|+1},
\end{align*}
for a new draw of $Z$ and sample $\Z$. For the above argument, we used $\{\texttt{Correct}, \texttt{FP}, \texttt{FN}\}$ as a concrete example of categorical loss values, but clearly this approach does not rely on any special properties of the symbols being used, and generalizes to arbitrary countable sets of symbols.

\paragraph{Sample-based losses}

Our focus up to this point has been on evaluating performance in a pointwise fashion, i.e., evaluating a candidate $h \in \HH$ based on a single point $z \in \ZZ$, quantified in the form of a loss $\loss(h;z)$. Of course, it is common practice to evaluate using not just one point, but a separate sample of points, often referred to as the ``test set.'' Here we simply remark that at the cost of more data, one can trivially extend the CPP framework introduced in sections \ref{sec:cpp_candidate}--\ref{sec:cpp_algo} to sample-driven losses. As a simple, concrete example to show how this could be done, consider candidate CPP in section \ref{sec:cpp_candidate}, where $\algo_{\TR}$ is the candidate obtained by running $\algo$ on the training set $\Z_{\TR}$. Now, instead of using the calibration data $\Z_{\CP}$ as-is, one would need to split it up into $k$ subsets for constructing sample-based losses, as
\begin{align*}
\II_{\CP} = \bigcup_{j=1}^{k} \II_{\CP}^{(j)},
\end{align*}
where we assume that each subset $\II_{\CP}^{(j)}$ has the same cardinality, say $m=|\II_{\CP}^{(j)}|$ for all $j \in [k]$. Then, if we have a loss function $\loss(h;\{z_{1},z_{2},\ldots\})$ that is defined for collections of one or more points, we can simply re-christen the set $\losses_{\CP}$ used in section \ref{sec:cpp_candidate} to be
\begin{align*}
\losses_{\CP} \defeq \left\{ \loss(\algo_{\TR};\{Z_{i}: i \in \II_{\CP}^{(j)}\}) : j \in [k] \right\},
\end{align*}
and the validity/calibration guarantees of Theorem \ref{thm:cpp_candidate} can be easily extended to yield
\begin{align*}
1-\alpha \leq \prr\left\{ \loss(\algo_{\TR};\{Z_{1}^{\prime},\ldots,Z_{m}^{\prime}\}) \in \widehat{\EE}_{\alpha}(\algo_{\TR}) \right\} \leq 1-\alpha + \frac{1}{k+1},
\end{align*}
where $Z_{1}^{\prime},\ldots,Z_{m}^{\prime}$ denotes a fresh testing sample, up to the usual caveats regarding exchangeability and tie-breaking. Setting $k=|\II_{\CP}|$ is the special case of pointwise losses, i.e., $m=1$. Taking a larger $k$ gives more data for conformal inference, but introduces more noise into the evaluation process due to sampling a smaller set. This strategy can be applied in an analogous fashion to extend all the constructions of CPP for algorithms in section \ref{sec:cpp_algo} to capture sample-based losses.

\section{Empirical examples}\label{sec:empirical}

In this section, we provide two natural examples of the proposed CPP framework being implemented and used to analyze different learning algorithms.

\subsection{Classification: comparing linear versus non-linear SVM}\label{sec:empirical_optdigits}

In our first example, we consider a multi-class classification problem using the well-known ``digits'' dataset.\footnote{\url{https://archive.ics.uci.edu/ml/datasets/Optical+Recognition+of+Handwritten+Digits}} We compare two types of support vector machine classifiers, one using the linear kernel (denoted \texttt{SVM-lin}), and the other using the radial basis function kernel (denoted \texttt{SVM-rbf}).\footnote{This is implemented using instances of the \texttt{svm.SVC} class in the scikit-learn library \citep{scikit-learn2011}, with default settings aside from the choice of kernel.} The classifiers are given raw images without any pre-processing.

\begin{figure}[t]
\centering
\includegraphics[width=0.49\textwidth]{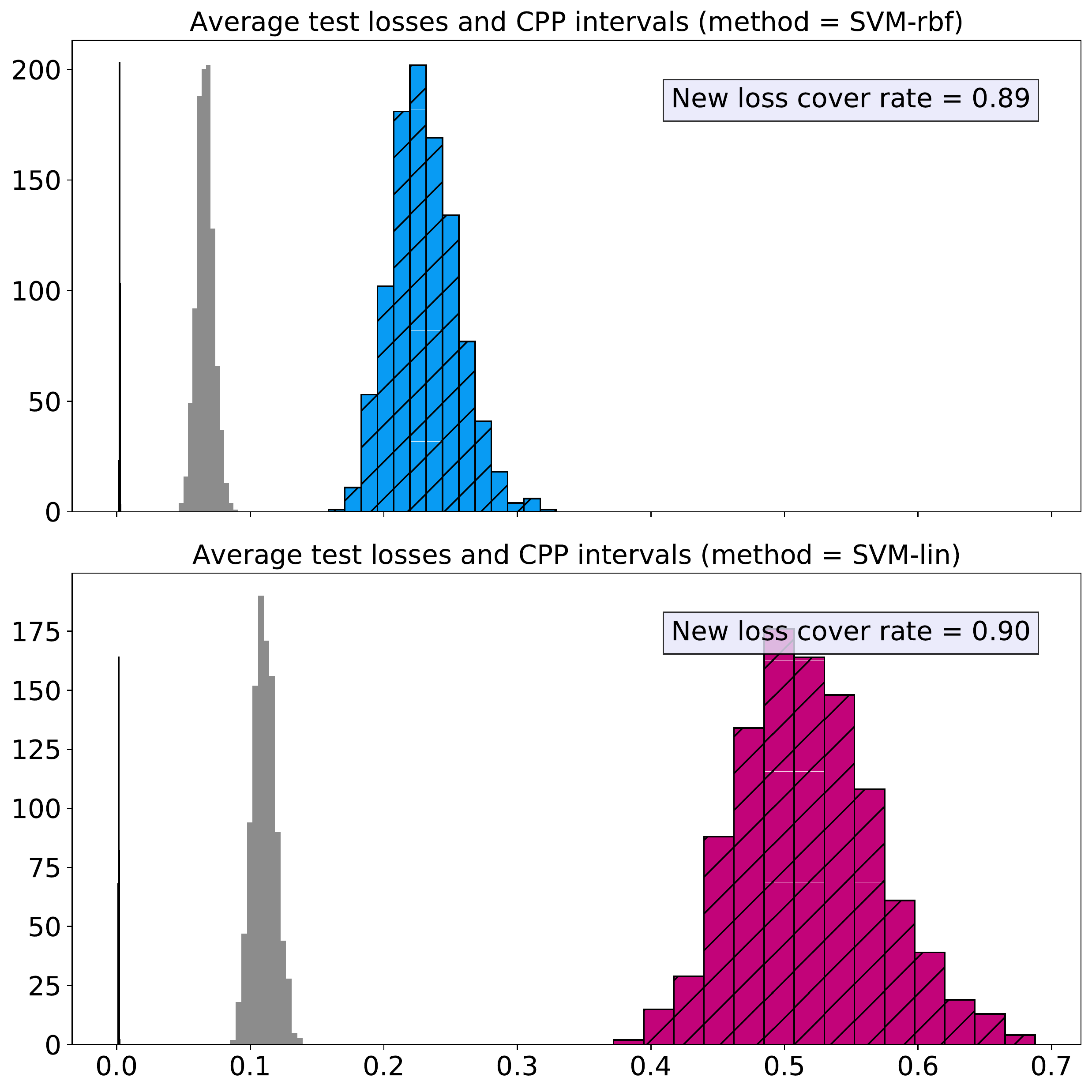}\,\includegraphics[width=0.49\textwidth]{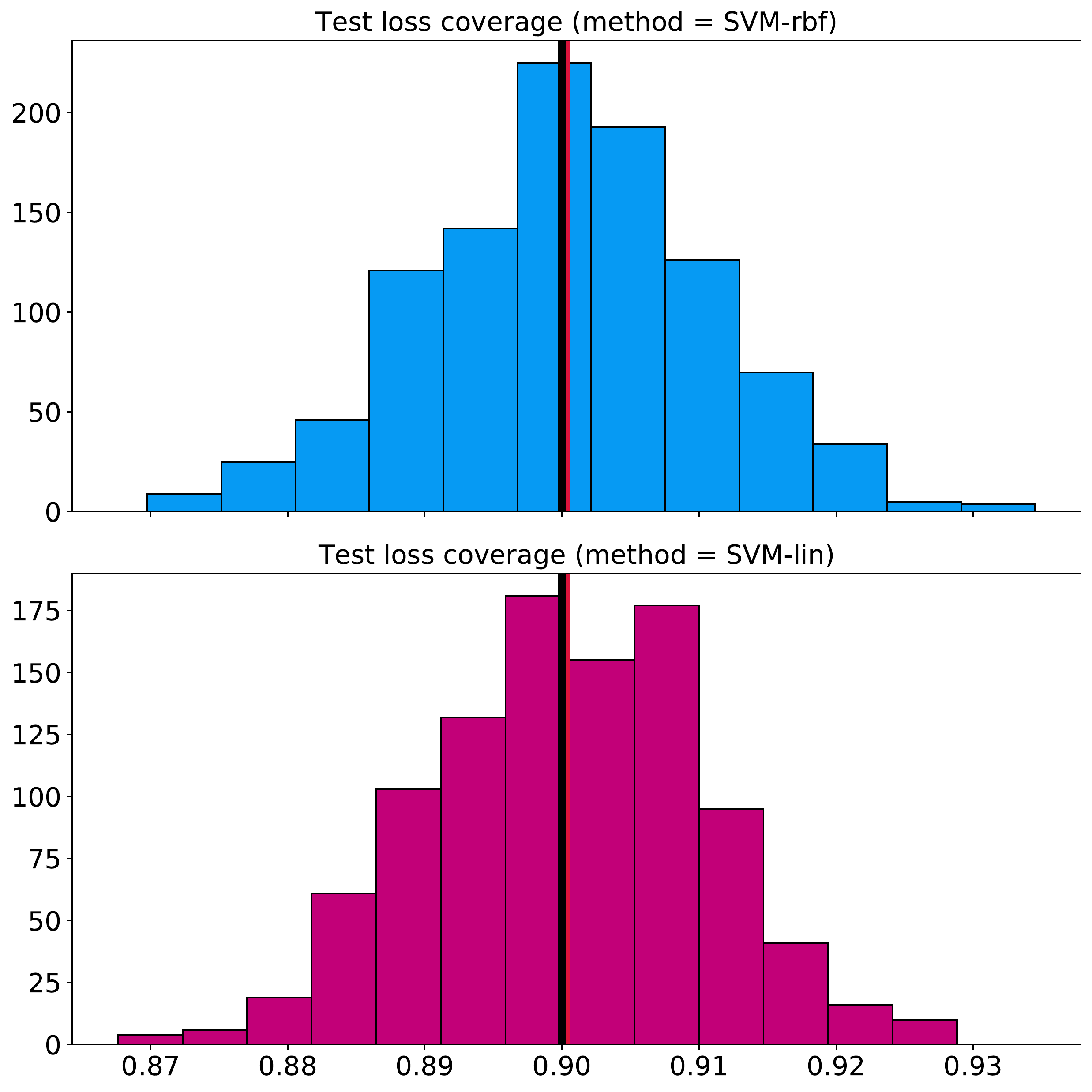}
\caption{Candidate CPP applied at $90\%$ confidence, for SVM-based multi-class classification, under the logistic loss.}
\label{fig:optdigits}
\end{figure}

As an initial example, we consider candidate CPP (section \ref{sec:cpp_candidate}), at a $90\%$ confidence level (i.e., $\alpha = 0.1$). We run $1000$ trials, and for each trial, we randomly choose $75\%$ of the full dataset to be used as $\Z_{n}$; the rest is used for testing. $\Z_{n}$ is then split into $\Z_{\TR}$ and $\Z_{\CP}$, with $|\II_{\TR}|=|\II_{\CP}|$. For $\loss(h;z)$, we use the logistic loss.\footnote{We simply re-write \texttt{metrics.log\_loss} in scikit-learn to not sum over data points.} Representative results are given in Figure \ref{fig:optdigits}; see also Figure \ref{fig:ex_optdigits}. For each trial and each method, using $\Z_{\TR}$, we get $\algo_{\TR}$, and then using $\Z_{\CP}$, we get $\widehat{\EE}_{\alpha}(\algo_{\TR})$. In the left-most plots of Figure \ref{fig:optdigits}, we have histograms of ``average test losses'' (grey) and ``CPP intervals'' for each method (colored). By average test losses, we refer to the empirical mean of $\loss(\algo_{\TR};Z_{i})$ evaluated on the test data (the remaining $25\%$ of the full dataset). By CPP intervals, we mean the left and right sides of the interval characterizing $\widehat{\EE}_{\alpha}(\algo_{\TR})$; the left sides all tend to be very close to zero, and thus are difficult to see. We have drawn diagonal lines through the histogram of the upper bounds of $\widehat{\EE}_{\alpha}(\algo_{\TR})$ to distinguish them from the lower bounds. Also, for each trial, we randomly choose a \emph{single} point from the test set, say $Z_{\text{test}}$, and check whether or not $\loss(\algo_{\TR};Z_{\text{test}}) \in \widehat{\EE}_{\alpha}(\algo_{\TR})$. The fraction of the trials in which this condition turned out to be true is what we call the ``new loss cover rate.'' On the right-hand side of Figure \ref{fig:optdigits}, we have a histogram of ``test loss coverage.'' For each trial, the test loss coverage is simply the fraction of the test losses that are contained in $\widehat{\EE}_{\alpha}(\algo_{\TR})$. Finally, the top plot in Figure \ref{fig:ex_optdigits} gives the widths computed by subtracting the upper end of $\widehat{\EE}_{\alpha}(\algo_{\TR})$ from the lower end, for each method and each trial.

Next, in order to illustrate one potentially interesting application of $Z$-modulated algorithm CPP, we run a single trial, and for concreteness we do variable-width intervals (section \ref{sec:cpp_algo_zmod_variable}) at $90\%$ confidence. Once again $75\%$ of the full dataset is used as $\Z_{n}$, with the rest used for testing, and again we have $|\II_{\TR}|=|\II_{\CP}|$. The sub-indices are created such that $|\II_{\TR}^{\prime}|=|\II_{\CP}^{\prime}|=50$. For \texttt{QuantReg}, we use random forest quantile regression \citep{meinshausen2006a}, but instead of passing the raw pairs $(Z_{j},\loss(\algo_{\TR}^{(j)};Z_{j}))$ for $j \in \II_{\TR}^{\prime}$ to this sub-routine, we normalize pixel values to the unit interval, and then reduce the dimension of the image part of $Z_{j}$ from $8 \times 8 = 64$ to $4$, using vertical and horizontal symmetry,\footnote{We define vertical/horizontal \emph{asymmetry} as the mean of the absolute pixel-wise differences computed between the original image and the image reflected along the horizontal/vertical axis. Symmetry is then defined simply as $\text{Symmetry }= (-1) \times \text{Asymmetry}$.} mean pixel value, and the standard deviation of pixel values. Finally, the label part of $Z_{j}$ is turned into a one-hot vector ($10$ classes here). Thus, the dimension of the modified ``inputs'' being fed to \texttt{QuantReg} have dimension $4 + 10 = 14$. Using this feature transformation included in the \texttt{QuantReg} sub-routine, for each random trial, we obtain $\widehat{\EE}_{\alpha}(z;\algo)$ for a single trial. We give two examples, one in the bottom plot of Figure \ref{fig:ex_optdigits}, and one in Figure \ref{fig:optdigits_zmod}. In both figures, we have chosen a random data point from the test data (shown on the left), say $Z_{\text{test}}$. Shown on the right, we have $\widehat{\EE}_{\alpha}(Z_{\text{test}};\algo)$ drawn as colored bars for each method, with black diamonds showing the actual loss $\loss(\algo(\Z);Z_{\text{test}})$ incurred given a fresh sample $\Z$ from the test data.

\begin{figure}[t]
\centering
\includegraphics[width=0.75\textwidth]{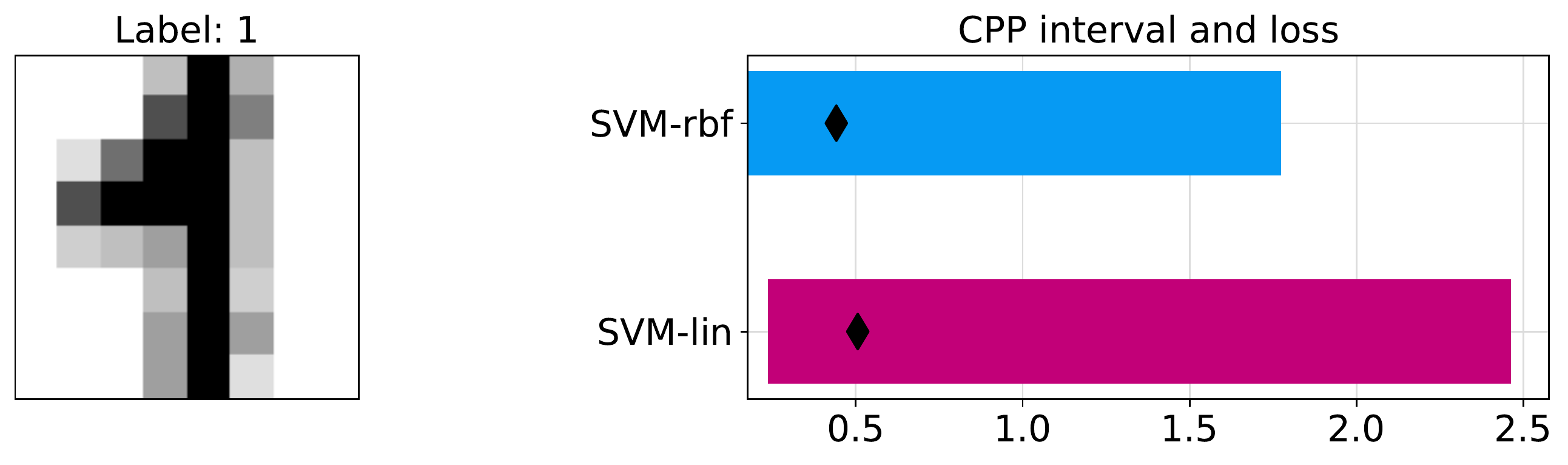}
\caption{Algorithm CPP ($Z$-modulated, variable width) at $90\%$ confidence, used for linear/non-linear SVM applied to the digits data.}
\label{fig:optdigits_zmod}
\end{figure}

\subsection{Stochastic convex optimization: comparing batch GD with SGD}\label{sec:empirical_quadnoise}

In our second example, we consider a simulated regression problem using a simple linear model with additive noise. We generate pairs $Z=(X,Y)$ with the relation $Y = \langle \wstar, X \rangle + \epsilon$, where $\wstar$ is a $d$-dimensional vector of $1$s, the inputs $X$ follow an isotropic multivariate Normal distribution (with unit variance along the diagonal), and noise $\epsilon$ follows a zero-mean Normal distribution with standard deviation of $2.2$ for the candidate CPP case, and a Student-t distribution with degrees of freedom set to $2.1$ for the algorithm CPP case to further highlight the impact of more heavy-tailed errors. We compare two iterative learning algorithms, namely empirical risk minimization implemented by batch gradient descent (step size $0.1$, denoted \texttt{GD\_ERM}) and stochastic gradient descent (step size $0.01$, denoted \texttt{SGD\_ERM}). Here the ``gradients'' we refer to are the gradients of the squared error taken with respect to the candidate weights at any given iteration. Both methods are randomly initialized by selecting each initial weight randomly from $\text{Uniform}[-5,5]$. We give both methods a fixed ``budget'' of $5|\II_{\TR}|$. That is, batch GD is only allowed five updates, whereas SGD is allowed to take five full passes.

Here as well, we consider both candidate and algorithm CPP at $90\%$ confidence, using the squared error for $\loss(h;z)$. First, implementing candidate CPP, we run $1000$ randomized trials, and generate a sample $\Z_{n}$ of independent pairs as described in the previous paragraph, with $n=7500$, and $|\II_{\TR}|=|\II_{\CP}|$. The test set size is $2500$. Representative results are given in Figure \ref{fig:quadnoise_candidate_normal}, where average test losses, CPP intervals, new loss cover rates, and test loss coverage are all computed in the exact same fashion as described in the previous sub-section for Figure \ref{fig:optdigits}. The widths of the CPP intervals in the top plot of Figure \ref{fig:quadnoise_candidate_normal} for both methods are plotted together in Figure \ref{fig:ex_quadnoise}. Next, we consider implementing $Z$-free algorithm CPP (section \ref{sec:cpp_algo_zfree}). We run again $1000$ randomized trials, where for each trial, we generate a sample $\Z_{n}$ with $n=75000$ and $|\II_{\EV}|=1000$, so $|\II_{\TR}^{(j)}|=(n-|\II_{\EV}|)/|\II_{\EV}|=74$ for each $j \in \II_{\EV}$. Representative results are given in Figure \ref{fig:quadnoise_algorithm_student}, with average test losses, CPP intervals, new loss cover rates, and test loss coverage all defined in an analogous fashion to the previous figure, now using the CPP interval $\widehat{\EE}_{\alpha}(\algo)$ from (\ref{eqn:cpp_algo_zfree_predset}) instead of (\ref{eqn:cpp_candidate_predset}).

\begin{figure}[t]
\centering
\includegraphics[width=0.49\textwidth]{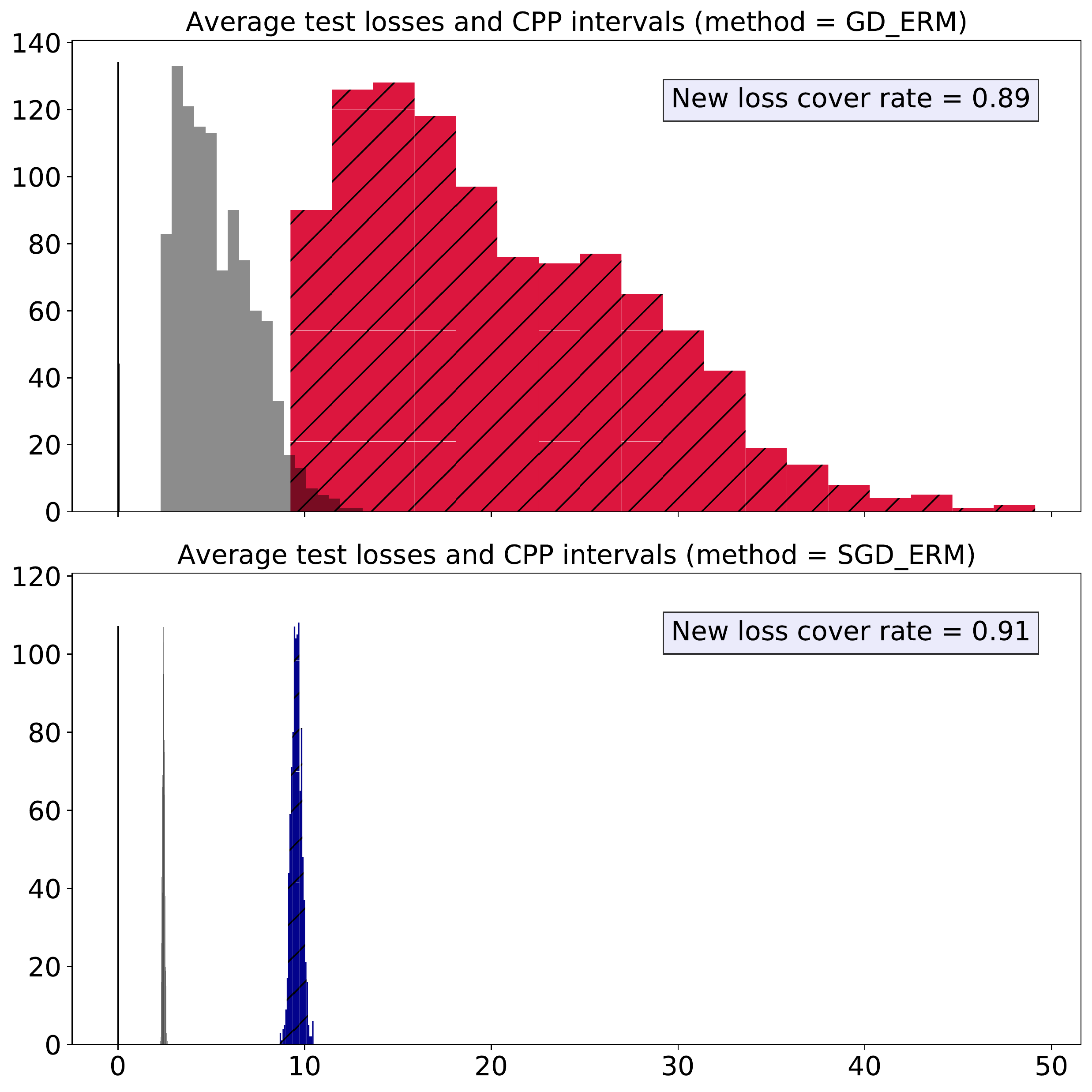}\,\includegraphics[width=0.49\textwidth]{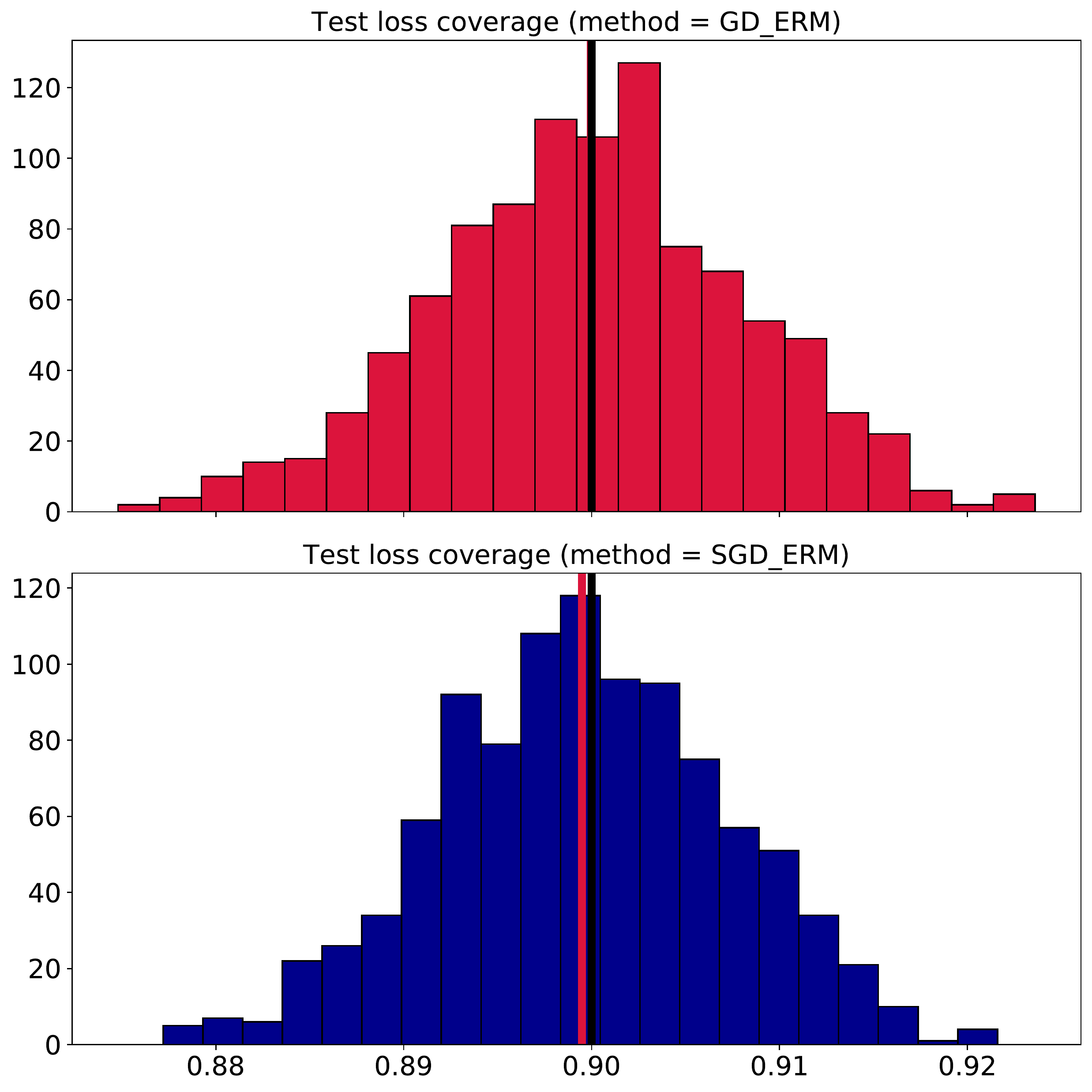}
\caption{Candidate CPP applied at $90\%$ confidence, for gradient-based stochastic convex optimization, under the squared error.}
\label{fig:quadnoise_candidate_normal}
\end{figure}

\begin{figure}[t]
\centering
\includegraphics[width=0.49\textwidth]{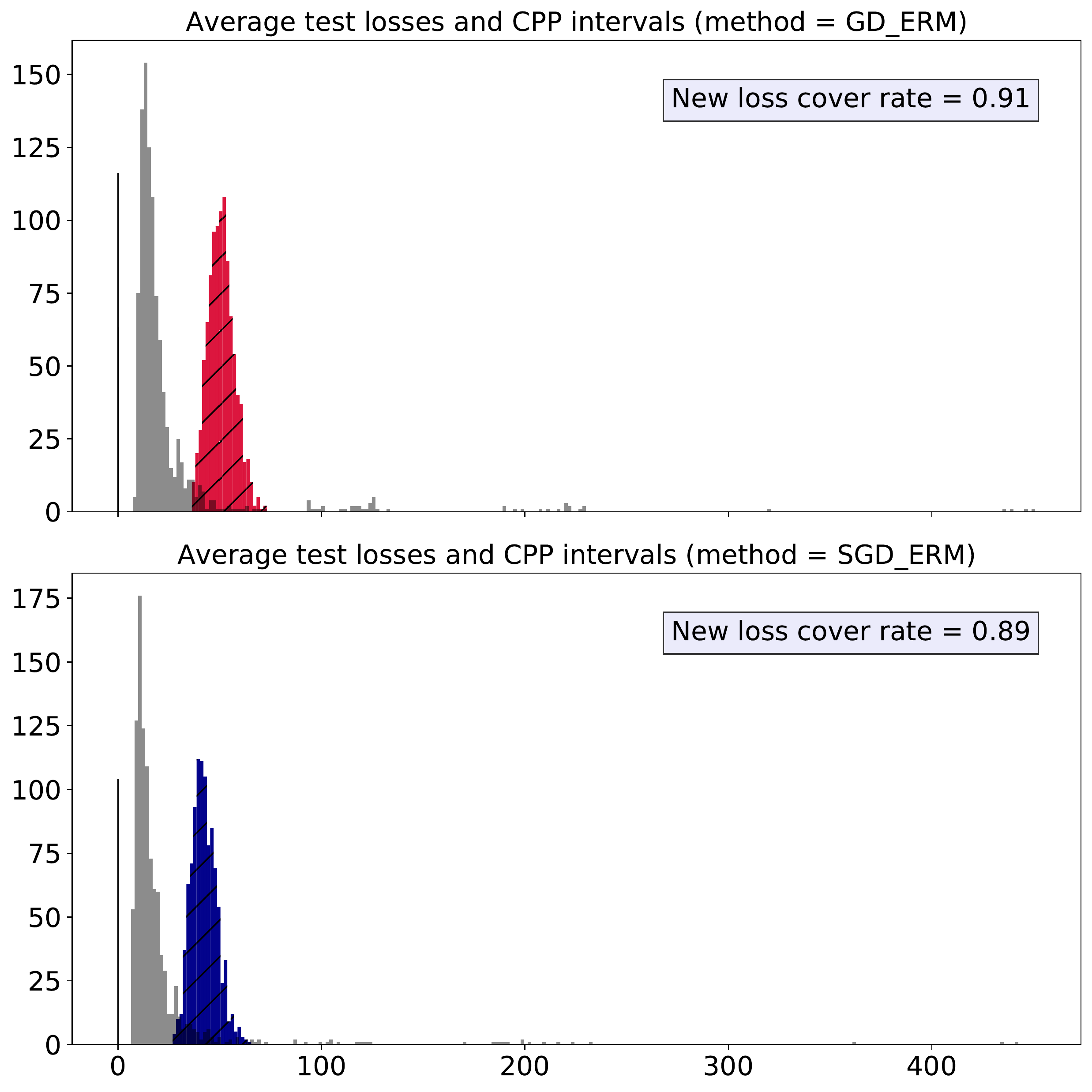}\,\includegraphics[width=0.49\textwidth]{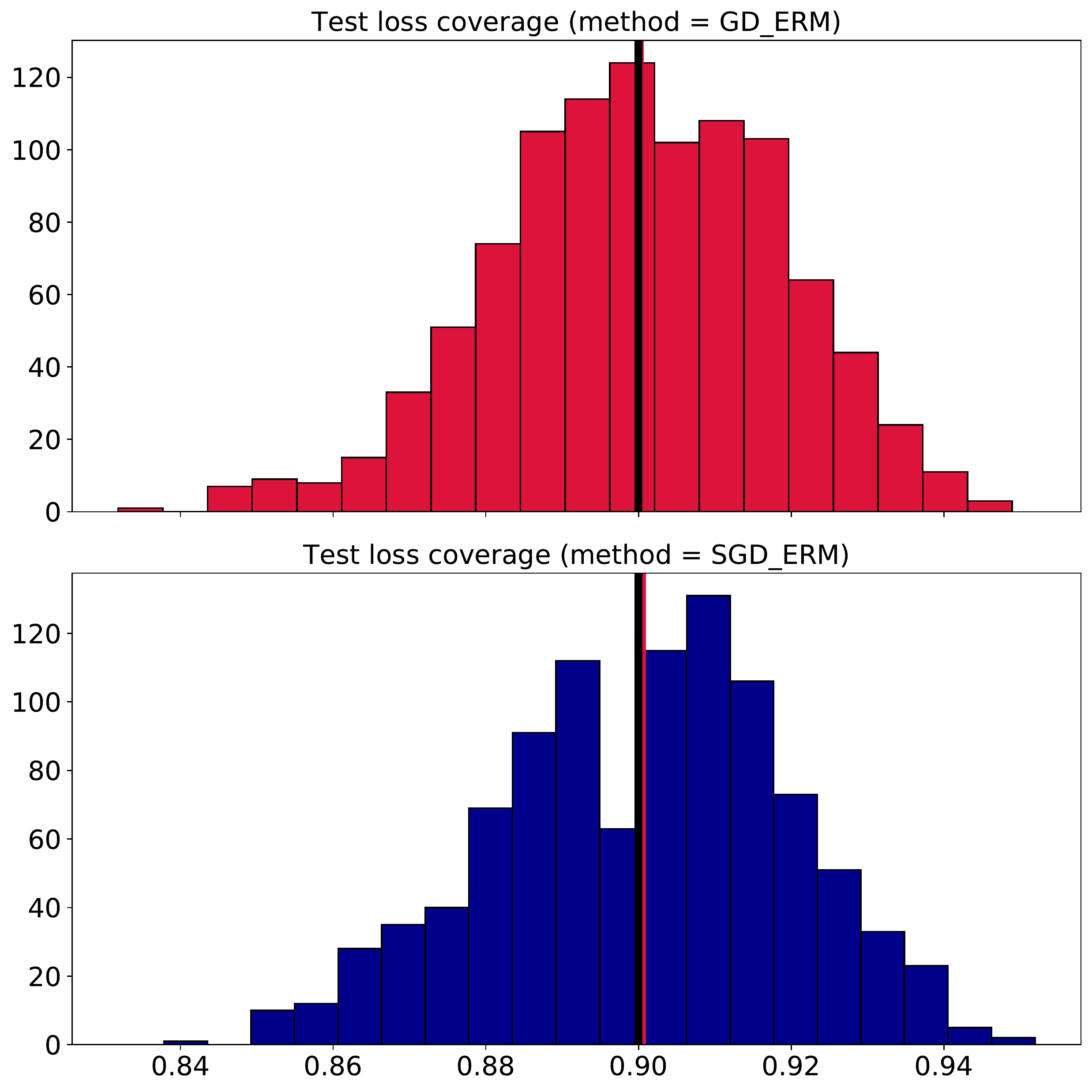}
\caption{Algorithm CPP ($Z$-free) at $90\%$ confidence, for gradient-based stochastic convex optimization, under the squared error.}
\label{fig:quadnoise_algorithm_student}
\end{figure}

\section{Future directions}

The scope of this first paper has been restricted to just the most basic formulation of the proposed CPP framework, plus the background needed to analyze this framework and a handful of empirical examples as an illustrative proof of concept. Due the generality of this methodology, there are numerous possibilities for future research directions, both in terms of basic research and more applied work. Compared with traditional ``candidate'' type conformal prediction, the algorithm-centric approach discussed here is much more data-intensive, and coverage guarantees can be easily confirmed to erode away when the conformal approach is replaced by a naive cross-validation procedure. Thus, more data-efficient approaches that can guarantee better off-sample coverage is a natural point of interest. Dealing with settings where exchangeability is not satisfied (e.g., a covariate shift scenario \citep{tibshirani2020a}) is certainly important and worth investigating, including potential links to detection of model shift. Going in a different direction, using computable indicators of off-sample performance as part of a larger algorithmic strategy, and the potential for links to more traditional risk-centric PAC-learning guarantees is also of interest. On the applied side, the black-box sub-routines $\texttt{Regression}$ and $\texttt{QuantReg}$ may be tasked with learning a complex and highly non-linear relation between the data and algorithm performance. As such, both model and algorithm decisions with respect to this sub-routine are in fact critical to the practical utility of all conformal prediction approaches, and domain-specific applied work aimed at establishing some best practices is assuredly of great value moving forward.

\appendix

\section{Additional proofs}\label{sec:apdx_proofs}

\begin{proof}[Proof of Theorem \ref{thm:cpp_candidate}]
To begin, we use generic random variables $\U_{n} = (U_{1},\ldots,U_{n})$, assumed to be exchangeable. First, observe that for any $i \in [n]$ and $0 < \beta < 1$, we have
\begin{align}\label{eqn:cpp_candidate_0}
\prr\left\{ U_{i} < \qnt_{\beta}\left[ \U_{n} \right] \right\} = \prr\left\{ U_{i} < U_{(\lceil n\beta \rceil,n)} \right\} \leq \prr\left\{ U_{i} < U_{(\lfloor n\beta \rfloor+1,n)} \right\} = \prr\left\{ U_{i} < \qnt_{\beta}^{-}\left[ \U_{n} \right] \right\}.
\end{align}
The first and last equalities follow respectively from (\ref{eqn:quantile_order_relation}) and (\ref{eqn:rightquantile_order_relation}). The inequality follows by monotonicity and the fact that $\lceil n\beta \rceil \leq \lfloor n\beta \rfloor + 1$. Using (\ref{eqn:cpp_candidate_0}) along with (\ref{eqn:quantile_sum_upperbd}), and setting $\beta = \alpha/2$, we obtain
\begin{align}\label{eqn:cpp_candidate_1}
\prr\left\{ U_{i} < \qnt_{\alpha/2}\left[ \U_{n} \right] \right\} \leq \frac{\alpha}{2}.
\end{align}
This upper bound will be useful for lower-bounding the following two-sided event. Denoting the distribution functions of $U_{i}$ by $F_{i}$ and $F_{i}^{-}$, we can obtain a lower bound for on-sample two-sided intervals as
\begin{align}
\nonumber
\prr\left\{ \qnt_{\alpha/2}\left[\U_{n}\right] \leq U_{i} \leq \qnt_{1-\alpha/2}\left[\U_{n}\right] \right\} & = F_{i}\left(\qnt_{1-\alpha/2}\left[\U_{n}\right]\right) - F_{i}^{-}\left(\qnt_{\alpha/2}\left[\U_{n}\right]\right)\\
\nonumber
& = \prr\left\{ U_{i} \leq \qnt_{1-\alpha/2}\left[\U_{n}\right] \right\} - \prr\left\{ U_{i} < \qnt_{\alpha/2}\left[\U_{n}\right] \right\}\\
\nonumber
& \geq \left(1-\frac{\alpha}{2}\right) - \frac{\alpha}{2}\\
\label{eqn:cpp_candidate_2}
& = 1-\alpha. 
\end{align}
The inequality leading to (\ref{eqn:cpp_candidate_2}) follows from applying Lemma \ref{lem:quantile_onsample} to lower-bound the term being added, and (\ref{eqn:cpp_candidate_1}) to upper-bound the term being subtracted. Let $U$ be a newly drawn random variable such that $\U_{n}$ and $U$ are all exchangeable and almost surely distinct. From the exposition leading up to Lemma \ref{lem:quantile_onsample}, we already know that if we set $\beta_{\textup{hi}} \defeq (1+1/n)(1-\alpha/2)$, then we have the equivalence
\begin{align}\label{eqn:cpp_candidate_upper_equiv}
U \leq \qnt_{\beta_{\textup{hi}}}\left[ \U_{n} \right] \iff U \leq \qnt_{1-\alpha/2}\left[ \U_{n} \cup \{U\} \right].
\end{align}
As for the lower end of the desired interval, first recall that via (\ref{eqn:quantile_order_relation}), for any choice of $0 < \beta < 1$, we have
\begin{align}\label{eqn:cpp_candidate_3}
\qnt_{\beta}\left[ \U_{n} \right] = U_{(\lceil n\beta \rceil,n)} = U_{[n-\lceil n\beta \rceil+1,n]}.
\end{align}
We want to link this up with $\qnt_{\alpha/2}[\U_{n} \cup \{U\}]$ via an appropriately ``deflated'' quantile level $\beta < \alpha/2$. Just like in the case of upper bounds, the key observation is that comparing sorted elements of $\U_{n}$ before and after adding $\{U\}$, if $U$ is among the $k$th-largest elements of $\U_{n}$, it is also among the $k$th-largest elements of $\U_{n} \cup \{U\}$ (again, the converse clearly holds, see Figure \ref{fig:schematic_sorting}). Algebraically, we have that for any $1 \leq k \leq n$,
\begin{align}
U \geq U_{[k,n]} \iff U \geq U_{[k,n+1]}.
\end{align}
Using this basic observation and (\ref{eqn:quantile_order_relation}), we have
\begin{align}
\nonumber
U \geq \qnt_{\alpha/2}\left[ \U_{n} \cup \{U\} \right] & \iff U \geq U_{[n+2-\lceil (n+1)\alpha/2 \rceil,n+1]}\\
\label{eqn:cpp_candidate_4}
& \iff U \geq U_{[n+2-\lceil (n+1)\alpha/2 \rceil,n]}.
\end{align}
Comparing the right-most side of (\ref{eqn:cpp_candidate_3}) and (\ref{eqn:cpp_candidate_4}), we want to choose $\beta$ such that
\begin{align*}
n - \lceil n\beta \rceil + 1 = n + 2 - \lceil (n+1)\alpha/2 \rceil.
\end{align*}
It is easily checked that setting $\beta$ to $\beta_{\textup{lo}} \defeq \alpha/2 - (1-\alpha/2)/n$ achieves this. That is, we have
\begin{align}\label{eqn:cpp_candidate_lower_equiv}
U \geq \qnt_{\beta_{\textup{lo}}}\left[ \U_{n} \right] \iff U \geq \qnt_{\alpha/2}\left[ \U_{n} \cup \{U\} \right].
\end{align}
Taking (\ref{eqn:cpp_candidate_upper_equiv}) and (\ref{eqn:cpp_candidate_lower_equiv}) together, we have
\begin{align}\label{eqn:cpp_candidate_5}
\qnt_{\beta_{\textup{lo}}}\left[\U_{n}\right] \leq U \leq \qnt_{\beta_{\textup{hi}}}\left[\U_{n}\right] \iff \qnt_{\alpha/2}\left[\U_{n} \cup \{U\} \right] \leq U \leq \qnt_{1-\alpha/2}\left[ \U_{n} \cup \{U\} \right].
\end{align}
The left-hand side of (\ref{eqn:cpp_candidate_5}) is the event we are interested in. The right-hand side can be controlled by applying the basic fact (\ref{eqn:cpp_candidate_2}), but this time to the $(n+1)$-sized data set $\U_{n} \cup \{U\}$ including $U$, instead of $\U_{n}$. It immediately follows that
\begin{align}\label{eqn:cpp_candidate_6}
\prr\left\{ \qnt_{\beta_{\textup{lo}}}\left[\U_{n}\right] \leq U \leq \qnt_{\beta_{\textup{hi}}}\left[\U_{n}\right] \right\} \geq 1-\alpha.
\end{align}
It just remains to deal with the calibration part of the theorem statement. Assume that the generic random variables $\U_{n}$ and $U$ are almost surely distinct, in addition to being exchangeable. Then for each $i \in [n]$, using exchangeability, probabilities can be computed exactly, and we have
\begin{align}
\nonumber
\prr\left\{ \qnt_{\alpha/2}\left[\U_{n}\right] \leq U_{i} \leq \qnt_{1-\alpha/2}\left[\U_{n}\right] \right\} & = \frac{\lceil n(1-\alpha/2) \rceil - (\lceil n\alpha/2 \rceil-1)}{n}\\
\nonumber
& \leq \left(1-\frac{\alpha}{2}\right) + \frac{2}{n} - \frac{\lceil n\alpha/2 \rceil}{n}\\
\label{eqn:cpp_candidate_7}
& \leq 1-\alpha + \frac{2}{n}.
\end{align}
Note that we subtract $\lceil n\alpha/2 \rceil-1$ instead of $\lceil n\alpha/2 \rceil$, since the $\lceil n\alpha/2 \rceil$th-smallest element is included in the interval specified by the desired event. The inequalities follow immediately from the elementary inequality $x \leq \lceil x \rceil \leq x+1$. With this new upper bound (\ref{eqn:cpp_candidate_7}) in hand, another application of (\ref{eqn:cpp_candidate_5}) and (\ref{eqn:cpp_candidate_2}) yields
\begin{align}\label{eqn:cpp_candidate_8}
\prr\left\{ \qnt_{\beta_{\textup{lo}}}\left[\U_{n}\right] \leq U \leq \qnt_{\beta_{\textup{hi}}}\left[\U_{n}\right] \right\} \leq 1-\alpha + \frac{2}{n}.
\end{align}
To conclude the proof of validity and calibration, we respectively apply (\ref{eqn:cpp_candidate_6}) and (\ref{eqn:cpp_candidate_8}), using the correspondence $U_{i} \leftrightarrow \loss(\algo_{\TR};Z_{i})$ and $U \leftrightarrow \loss(\algo_{\TR};Z)$, noting that the sample size changes from $n$ to $|\II_{\CP}|$ due to splitting. This sample size is reflected in the settings of $\alpha_{\textup{lo}}$ and $\alpha_{\textup{hi}}$ used in the construction of (\ref{eqn:cpp_candidate_predset}).
\end{proof}

\begin{proof}[Proof of Theorem \ref{thm:cpp_algo_zfree}]
Recalling our previous proof of Theorem \ref{thm:cpp_candidate}, a perfectly analogous argument can be applied here, except with the key correspondence being $U_{i} \leftrightarrow \loss(\algo_{\TR}^{(i)};Z_{i})$ and $U \leftrightarrow \loss(\algo(\Z);Z)$. By requiring the partition of $\II_{\TR}$ to be a disjoint partition, we have $\II_{\TR}^{(i)} \cap \II_{\TR}^{(j)} = \emptyset$ for all $i \neq j$. Since we are ensuring all the training subsets $\Z_{\TR}^{(j)}$ and $\Z$ have the same number of elements, exchangeability of the data coupled with symmetry of $\algo$ immediately implies that the losses $\loss(\algo_{\TR}^{(i)};Z_{i})$ and $\loss(\algo(\Z);Z)$ are all exchangeable. Thus, we can apply (\ref{eqn:cpp_candidate_6}) and (\ref{eqn:cpp_candidate_8}) just as in the proof of Theorem \ref{thm:cpp_candidate}, using the correspondence just stated, to obtain the desired result.
\end{proof}

\begin{proof}[Proof of Theorem \ref{thm:cpp_algo_zmod_fixed}]
As mentioned in the main text, this result effectively follows from Theorem \ref{thm:cp_regression_split}, applied to a rather different regression problem than is typical in the literature. The correspondence between classical split conformal regression (Theorem \ref{thm:cp_regression_split}) and our current setting is as follows: for the regression data and sample sizes, we have
\begin{align*}
X \leftrightarrow Z, \quad Y \leftrightarrow \loss(\algo(\Z);Z), \quad X_{j} \leftrightarrow Z_{j}, \quad Y_{j} \leftrightarrow \loss(\algo_{\TR}^{(j)};Z_{j}), \quad n \leftrightarrow |\II_{\CP}^{\prime}|,
\end{align*}
and for the non-conformity scores, we have
\begin{align*}
S(X_{j},Y_{j}) \leftrightarrow |\widehat{f}_{\CP}(Z_{j})-\loss(\algo_{\CP}^{(j)};Z_{j})|, \quad S(X,Y) \leftrightarrow |\widehat{f}_{\TR}(Z)-\loss(\algo(\Z);Z)|.
\end{align*}
With this correspondence in place, the first statement (validity guarantee) follows from an application of the first part of Theorem \ref{thm:cp_regression_split}, since the exchangeability of the non-conformity scores here follows from the exchangeability of the pairs $(Z_{j},\loss(\algo_{\CP}^{(j)};Z_{j}))$ and $(Z,\loss(\algo(\Z);Z))$, which in turn follows from the exchangeability of the data and the symmetry of $\algo$, just as was shown in the proof of Theorem \ref{thm:cpp_algo_zfree}. For the second statement (calibration guarantee), note that if the losses are almost surely distinct, then so are the non-conformity scores, meaning we can apply the second part of Theorem \ref{thm:cp_regression_split} to yield the desired result.
\end{proof}

\begin{proof}[Proof of Theorem \ref{thm:cpp_algo_zmod_variable}]
First note that by the construction of $\widehat{\EE}_{\alpha}$ in (\ref{eqn:cpp_algo_zmod_variable_predset}), drawing a new point $Z$ and sample $\Z$, it follows immediately that $\loss(\algo(\Z);Z) \in \widehat{\EE}_{\alpha}(Z)$ is equivalent to the following:
\begin{align*}
\widehat{q}_{\textup{lo}}(Z)-\loss(\algo(\Z);Z) \leq \qnt_{\alpha_{\CP}}\left[\S_{\CP}^{\prime\prime}\right] \text{ and } \loss(\algo(\Z);Z)-\widehat{q}_{\textup{hi}}(Z) \leq \qnt_{\alpha_{\CP}}\left[\S_{\CP}^{\prime\prime}\right].
\end{align*}
In turn, this can be equivalently re-stated as
\begin{align*}
\max\left\{\loss(\algo(\Z);Z)-\widehat{q}_{\textup{hi}}(Z), \widehat{q}_{\textup{lo}}(Z)-\loss(\algo(\Z);Z)\right\} \leq \qnt_{\alpha_{\CP}}\left[\S_{\CP}^{\prime\prime}\right].
\end{align*}
Up to exchangeability, we can apply the off-sample quantile property of Lemma \ref{lem:quantile_offsample} to obtain the desired validity and calibration results, using the correspondence
\begin{align*}
U_{j} & \leftrightarrow \max\left\{\loss(\algo_{\CP}^{(j)};Z_{j})-\widehat{q}_{\textup{hi}}(Z_{j}), \widehat{q}_{\textup{lo}}(Z_{j})-\loss(\algo_{\CP}^{(j)};Z_{j})\right\}\\
U & \leftrightarrow \max\left\{\loss(\algo(\Z);Z)-\widehat{q}_{\textup{hi}}(Z), \widehat{q}_{\textup{lo}}(Z)-\loss(\algo(\Z);Z)\right\},
\end{align*}
and noting that the conformal prediction set size is $|\II_{\CP}^{\prime}|$. As for the exchangeability of these $|\II_{\CP}^{\prime}|+1$ random variables, just as in proof of Theorem \ref{thm:cpp_algo_zmod_fixed}, the losses are exchangeable due to the exchangeability of the data and the symmetry of $\algo$, which immediately implies that the non-conformity scores used here are also exchangeable.
\end{proof}

\bibliographystyle{apalike}
\bibliography{refs}

\end{document}